\documentclass{article}

\usepackage{amsmath,amssymb,amsthm,natbib}
\usepackage{hyperref,xcolor,booktabs}
\usepackage{graphicx,float}
\usepackage{makecell}
\usepackage{algorithm}
\usepackage{algpseudocode}
\usepackage{tikz}
\usepackage{subcaption}

\usepackage{changes}

\definechangesauthor[name={Jisui}, color=purple]{js}

\usepackage{comment}

\usetikzlibrary{arrows.meta, positioning, shapes.geometric, decorations.pathreplacing}
\title{PTL-Diffusion: Manifold-Aware Diffusion with Periodic Terminal Laws}

\author{
Danqi Zhuang\textsuperscript{1},
Jisui Huang\textsuperscript{2},
Xiaoyue Xi\textsuperscript{3},
Andrew Kiggins\textsuperscript{4},\\
Xiaojie Wang\textsuperscript{5,6},
Ke Chen\textsuperscript{1,6},
Yue Wu\textsuperscript{1,6}
}

\date{}

\newtheorem{theorem}{Theorem}
\newtheorem{lemma}{Lemma}

\begin{document}

\maketitle
\footnotetext[1]{Department of Mathematics and Statistics, University of Strathclyde, Glasgow, UK}
\footnotetext[2]{Centre for Mathematical Imaging Techniques and Department	
of Mathematical Sciences, University of Liverpool, Liverpool, UK}
\footnotetext[3]{Department of Medical Statistics, London School of Hygiene \& Tropical Medicine, London, UK}
\footnotetext[4]{Unaffiliated; Email:\texttt{andrewkiggins2000@gmail.com}}
\footnotetext[5]{School of Mathematics and Statistics, HNP-LAMA, Central South University, Changsha Hunan, PR China}
\footnotetext[6]{%
\parbox[t]{0.95\textwidth}{%
\raggedright
Joint corresponding authors: 
x.j.wang7@csu.edu.cn; 
k.chen@strath.ac.uk; 
yue.wu@strath.ac.uk.
}%
}

\begin{abstract}
Standard diffusion models typically use a single time-homogeneous Gaussian terminal distribution as the reference law for generation. While this choice is analytically convenient and empirically powerful, it provides little explicit structure for data concentrated near low-dimensional manifolds, where different regions of the data distribution may correspond to distinct local geometric or semantic factors. As a result, the reverse model must recover manifold-level structure almost entirely from an unstructured terminal reference distribution.

We propose \emph{PTL-Diffusion}, a proof-of-concept diffusion framework whose forward noising process converges to a nonconstant periodic family of Gaussian terminal laws rather than to a single invariant Gaussian law. The phase variable acts as a coarse coordinate for organizing variation in the data distribution, and the resulting terminal family provides a structured reference geometry for the reverse process. Unlike a phase-conditioned DDPM, where phase information only enters the denoising network while the forward process remains unchanged, PTL-Diffusion embeds phase structure directly into the forward noising dynamics.

The proposed construction preserves much of the tractability of standard denoising diffusion models. For a periodically forced Ornstein--Uhlenbeck-type forward process, we derive closed-form forward marginals, identify the limiting periodic Gaussian terminal family, and obtain explicit Gaussian reverse posteriors. These formulae allow training with a standard noise-prediction objective. We further introduce an invariant-average regularization term that couples the phase-conditioned reverse dynamics through the averaged periodic reference law.

Experiments on synthetic torus and cylinder point-cloud benchmarks, together with the Olivetti face dataset, provide proof-of-concept evidence that periodic terminal laws can improve manifold-level distributional matching compared with a DDPM baseline under matched denoising architectures. In particular, PTL-Diffusion reduces phase-conditioned errors, feature-space covariance errors, and nearest-neighbour manifold distances. These results suggest that structured terminal reference laws are a promising direction for diffusion models on manifold-supported data, while also highlighting the need for more expressive phase constructions and larger-scale evaluations.

\end{abstract}
\section{Introduction}

Generative models play an important role in applications such as image style
translation, text-to-image generation, domain adaptation, and data
augmentation. Among many generative models, diffusion models
\citep{sohl2015deep,ho2020denoising,song2021score} have become a central
class of methods. They define a forward noising process that gradually destroys
data structure and transforms the data distribution into a simple terminal
reference law. The reverse model is then trained to map samples from this
reference law back to the data distribution. Conditional diffusion models can
further guide generation by passing class labels, text prompts, or other
auxiliary variables to the denoising network
\citep{ho2020denoising,dhariwal2021diffusion,ho2022classifier}.

Many high-dimensional datasets encountered in generative modelling are not
distributed throughout the ambient Euclidean space. Images of faces, physical
states, and stochastic trajectories often concentrate near low-dimensional
manifolds, whose coordinates may encode factors such as identity,
illumination, pose, phase, or latent dynamical state. This viewpoint is
classical in manifold learning and representation learning
\citep{turk1991eigenfaces,tenenbaum2000global,roweis2000nonlinear,
belkin2003laplacian,coifman2006diffusion,narayanan2010sample,
bengio2013representation}. A central challenge for diffusion models is
therefore not only to learn a high-dimensional distribution, but also to
recover the geometry of the data manifold while generating realistic samples.

In most standard diffusion constructions, the terminal reference law is a
single time-homogeneous Gaussian distribution. This choice is analytically
convenient and empirically powerful, but it is geometrically unstructured: it
does not retain coarse information about the manifold on which the data are
concentrated. Consequently, the reverse process must recover manifold-level
structure almost entirely from an essentially homogeneous noise distribution.
This can be restrictive for data supported near curved, cyclic, or multi-region
manifolds, where different regions of the distribution may correspond to
different local geometric or semantic factors.

From the viewpoint of manifold theory, a complex manifold is not generally
described by one global Euclidean coordinate system; rather, it can be
understood through local charts, each of which is Euclidean-like. This
observation motivates a simple question for diffusion modelling: instead of
using one homogeneous Gaussian terminal law, can we use a structured family of
Gaussian terminal laws as a coarse reference geometry for different regions or
phases of the data manifold? We use this chart intuition only as motivation:
our method does not explicitly learn a manifold atlas, transition maps, or a
Riemannian metric. Rather, it introduces a finite phase-indexed terminal
reference family that can act as a tractable coarse approximation to such
geometric organization.

Recent work has begun to address the interaction between score-based
generative modelling and non-Euclidean geometry. For example, Riemannian
score-based generative models formulate the diffusion process directly on a
manifold rather than in the ambient Euclidean space
\citep{debortoli2022riemannian}. More recent manifold-aware generative models
use normalizing flows or diffusion/score-based models to learn distributions
supported on nonlinear spaces
\citep{mathieu2020riemannian,debortoli2022riemannian,huang2022riemannian}.
Our approach is different. We remain in the ambient Euclidean space, but modify
the terminal reference structure of the forward process itself. Instead of
forcing the forward dynamics to converge to a single invariant Gaussian law, we
construct a forward process whose long-time law is a structured periodic family
of Gaussian measures.

We propose \emph{Periodic Terminal Law Diffusion} (PTL-Diffusion), a
proof-of-concept diffusion framework whose forward noising dynamics converge
to a nonconstant $P$-periodic Gaussian family. The phase variable is used as a
coarse coordinate for organizing variation in the data distribution. For
datasets with intrinsic cyclic geometry, such as torus or cylinder point-cloud
samples, the phase can be chosen from the corresponding angular coordinate.
For static image data such as faces, there is no canonical physical phase;
instead, we construct a phase index from a low-dimensional descriptor, such as
an eigenface coordinate or an illumination statistic. In both cases, the phase
should be interpreted as a coarse organizational coordinate, not as an
assumption that individual samples are themselves periodic. A schematic
overview of the proposed framework is shown in Figure~\ref{fig:ptl_diffusion_schematic}.
\begin{figure}[t]
\centering
\includegraphics[width=1.0\textwidth]{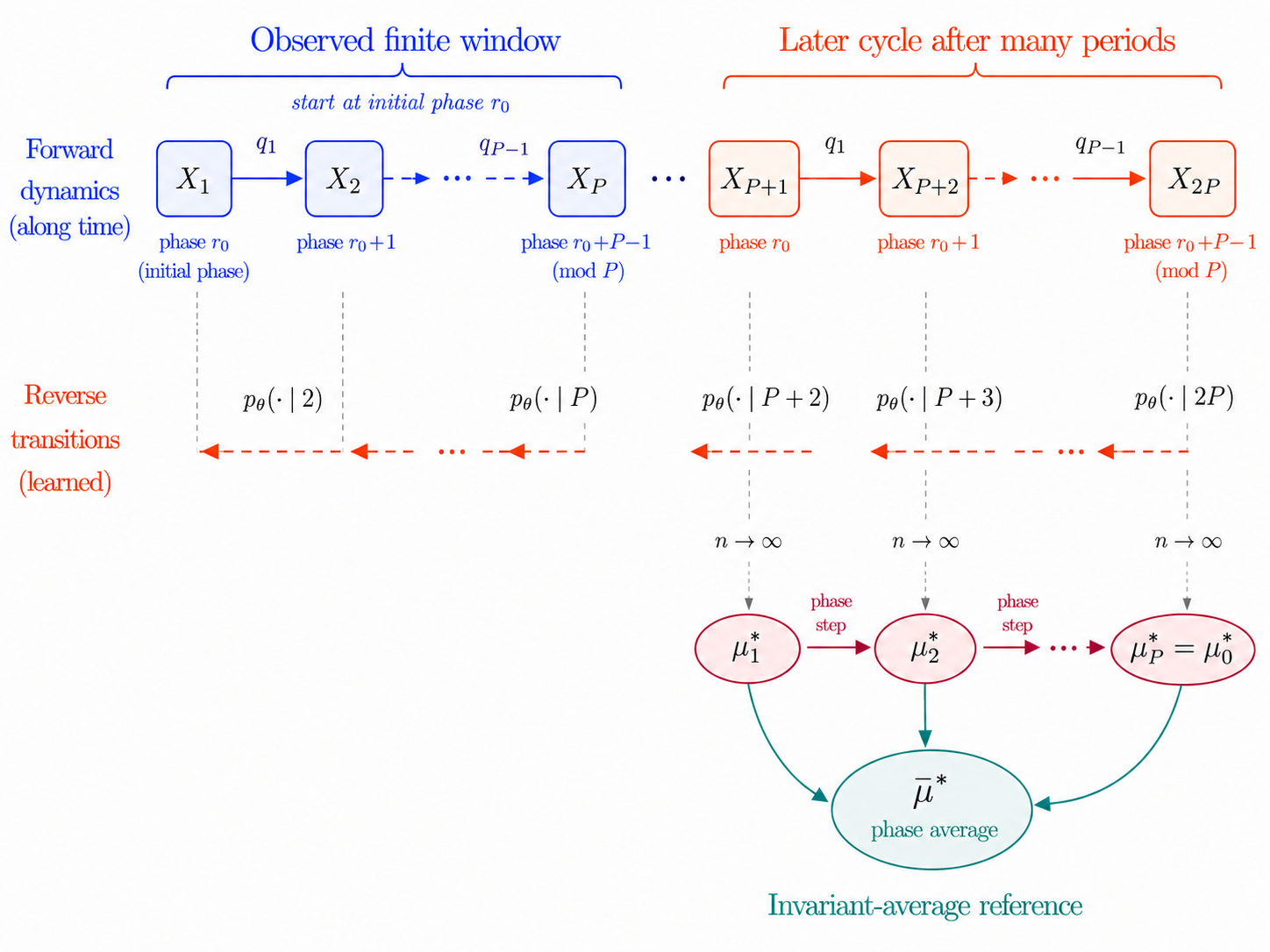}
\caption{
Schematic of PTL-Diffusion. The forward noising process is driven by a
periodic terminal reference family rather than by a single invariant Gaussian
law. The phase variable acts as a coarse coordinate for the structured terminal
family, while the reverse model learns phase-conditioned denoising dynamics
coupled by an invariant-average regularization.
}
\label{fig:ptl_diffusion_schematic}
\end{figure}

The mathematical motivation comes from the theory of random periodic solutions
and periodic measures for stochastic dynamical systems. In general, the
long-time object of a stochastic system is not necessarily a stationary
distribution, but may instead be a family of random variables or probability
measures evolving periodically in time. Random periodic solutions for
stochastic semi-flows were developed by \cite{feng2011pathwise} and further
studied for stochastic partial differential equations by
\cite{feng2012spde,feng2016anticipating}. Periodic measures and their ergodic
properties were investigated by \cite{feng2015periodic}. More recently, random
periodic behaviour has also been studied under weaker dissipativity
assumptions, including non-uniformly dissipative SDEs and functional SDEs with
finite or infinite delay \citep{bao2022random}, as well as McKean--Vlasov SDEs
and their interacting-particle approximations \citep{bao2024mckean}. Numerical approximation of random periodic solutions has been studied through
Euler--Maruyama and modified Milstein schemes for SDEs
\cite{fengliu2017numerical}, backward Euler--Maruyama schemes under monotone
drift conditions \cite{wu2023backward}, order-one backward Euler approximations
for semilinear SDEs \cite{guowangwu2025order}, and Galerkin-type exponential
integrators for semilinear stochastic evolution equations
\cite{wuyuan2023galerkin}. These works provide the conceptual background for
replacing a single stationary limiting law by a periodically evolving family of
laws.

Our use of this theory is deliberately different from directly modelling a
random periodic path or a functional stochastic differential equation. We do
not introduce memory variables or formulate the generative model as a delay
equation or an infinite-horizon stochastic integral equation. Instead, we
extract a simpler design principle: the forward noising dynamics of a diffusion
model may organize their limiting laws into a structured periodic family. In
PTL-Diffusion, this family acts as a tractable terminal reference geometry for
manifold-supported data.

PTL-Diffusion implements this principle through a periodically forced
Ornstein--Uhlenbeck-type forward process, inspired by the example illustrated in \cite{wuyuan2023galerkin}. Instead of converging to a single Gaussian terminal distribution, the forward
process converges phase-wise to a repeating family of $P$ Gaussian terminal laws,
\begin{equation}
    \{\mu_r^\ast\}_{r=0}^{P-1},
    \qquad
    \mu_{r+P}^\ast = \mu_r^\ast .
\end{equation}
Here $\mu_r^\ast$ denotes the terminal Gaussian law associated with phase $r$.
The condition $\mu_{r+P}^\ast=\mu_r^\ast$ means that the terminal family repeats
after $P$ phases. The family is nonconstant when at least two phases have
different terminal laws.

The resulting framework remains close to standard denoising diffusion models.
We derive explicit forward marginals and Gaussian reverse posteriors for the
periodically forced forward process; see Section~\ref{sec:method}. This allows
the reverse model to be trained using a standard noise-prediction objective.
Because the reverse dynamics are indexed by phase, we introduce a
phase-conditioned denoising network using circular sinusoidal phase embeddings.
We also impose an invariant-average regularization: although the model learns
phase-wise reverse dynamics, the average over phases should remain tied to a
coherent averaged reference law. This regularization couples the phase-wise
reverse models and reflects the role of the phase average as an invariant
object associated with the full periodic cycle.

Our work is related to conditional diffusion models, where class labels, text
prompts, or other auxiliary variables are supplied to the denoising network
while the forward noising process remains unchanged
\citep{ho2020denoising,dhariwal2021diffusion,ho2022classifier}. The
phase-conditioned DDPM baseline used in this paper follows this
conditioning-only principle: it receives the same phase embedding as
PTL-Diffusion, but its forward process still converges to one
time-homogeneous Gaussian terminal law. PTL-Diffusion differs in that the phase
coordinate is used to organize the forward reference law itself. Thus the
terminal object is a phase-indexed family of measures rather than a single
time-homogeneous Gaussian law. This distinction allows us to separate ordinary
phase conditioning from the proposed periodic terminal reference structure.

This paper should be read as a \emph{proof-of-concept} study. We do not claim
that a periodic terminal family is the correct reference law for all
manifold-supported data, nor that a finite phase partition can represent
arbitrary manifold geometry. The period $P$ is fixed and finite, so the
time-indexed or phase-indexed family of laws is represented through finitely
many phase classes. This can be restrictive for strongly aperiodic data,
continuously drifting nonstationary distributions, or manifolds whose latent
geometry cannot be well approximated by a coarse cyclic coordinate.
Nevertheless, the finite-period construction provides a controlled and
tractable setting in which the terminal reference law is no longer a single
invariant Gaussian, while the forward and reverse distributions remain
analytically tractable.

We evaluate PTL-Diffusion on two types of manifold-supported data. First, we
consider synthetic torus and cylinder point-cloud datasets. These examples
have explicit embedded-manifold structure and intrinsic angular coordinates,
making them controlled benchmarks for testing a phase-indexed terminal
reference law. Second, we evaluate the method on the Olivetti face dataset
\cite{olivetti_faces}, a small-scale image benchmark whose samples lie near a
low-dimensional eigenface manifold. In this case, phase is constructed from a
low-dimensional embedding of the data. Across these experiments, we compare
PTL-Diffusion with standard DDPM and phase-conditioned DDPM baselines under
matched denoising architectures and training budgets, thereby isolating the
contribution of the periodic terminal reference law from ordinary phase
conditioning. We report ambient-space metrics, manifold-feature metrics,
phase-conditioned errors, invariant-average errors, and nearest-neighbour
distances to the empirical data manifold.

Our contributions are as follows.
\begin{enumerate}
\item We introduce PTL-Diffusion, a diffusion framework whose forward process
converges to a nonconstant periodic Gaussian terminal family rather than a
single invariant terminal law.

\item We derive closed-form forward marginals and Gaussian reverse
posteriors for the periodically forced Ornstein--Uhlenbeck forward process,
preserving compatibility with standard denoising-based training
(see {Section \ref{sec:forward} and Section \ref{sec:reverse}}).
\item We introduce a phase-conditioned reverse model and an
invariant-average regularization that couples the phase-wise reverse
dynamics through the averaged reference law
({see Lemma \ref{lem:inv} and Section \ref{sec:inv}}).
\item We provide proof-of-concept experiments on manifold-supported data,
including point-cloud data on torus and cylinder geometries and a small
face-image manifold benchmark, showing that periodic terminal laws can
improve manifold-level distributional matching compared with standard Denoising Diffusion Probabilistic Model (DDPM)
and phase-conditioned DDPM baselines
({see Section \ref{sec:experiment}}).
\end{enumerate}

\section{Methodology}\label{sec:method}
This section introduces PTL-Diffusion, a diffusion framework whose forward
noising process is designed to approach a phase-indexed family of terminal
laws rather than a single time-homogeneous Gaussian reference law. We begin
with a tractable periodically forced Ornstein--Uhlenbeck-type recursion and
derive its closed-form forward marginals, periodic limiting family, and
Gaussian reverse posterior. These results show that the model remains
compatible with standard denoising-based training while replacing the terminal
reference distribution by a structured periodic family. We then introduce a
phase-conditioned reverse model, an invariant-average regularization that
couples the phase-wise reverse dynamics, and the corresponding training and
sampling algorithms. Finally, we describe how the periodic forcing can be
adapted from empirical phase centers when the data lie near a simple manifold
or admit a meaningful coarse phase coordinate.

\subsection{Periodic OU-type forward dynamics} \label{sec:forward}

In the following, all random variables are defined on a probability space
$(\Omega,\mathcal{F},\mathbb{P})$, and $\mathcal{L}(X)$ denotes the law of a
random variable $X$ under $\mathbb{P}$. We write $\mathbb{E}$ for expectation with
respect to $\mathbb{P}$. When the relevant random variables are indicated in
the subscript, for example $\mathbb{E}_{X,Y}[\cdot]$, the subscript specifies
the variables over which the expectation is taken, with all other quantities
held fixed or understood from context. Equivalently,
$\mathbb{E}_{X,Y}[\cdot]$ denotes expectation with respect to the joint law 
$\mathcal{L}(X,Y)$. 

Fix a period $P\in\mathbb{N}$. 
Given an initial phase $r_0\in\{0,\ldots,P-1\}$, we define
\begin{equation}\label{eqn:pou}
    x_{n+1}
    =
    \rho x_n
    +
    b_{r_0+n}
    +
    \sigma \varepsilon_{n+1},
\end{equation}
where $0<\rho<1$, $\sigma>0$, $(x_n)_{n} \subset \mathbb{R}^d$, $\varepsilon_n\sim\mathcal{N}(0,I_d)$ are i.i.d., $(b_n)_{n\in\mathbb{Z}}\subset\mathbb{R}^d$ is a nonconstant $P$-periodic forcing:
$$
b_{n+P}=b_n ,
$$
and all phase indices are interpreted modulo $P$.
The nonconstant assumption rules out the degenerate stationary case and ensures
that the limiting family is genuinely phase-dependent.
An example is
$
b_n=a\sin\left(2\pi n/P\right)u
$,
where $a\in\mathbb{R}$ controls the forcing amplitude and $u\in\mathbb{R}^d$ is a fixed direction.
This example is nonconstant when $P\geq 3$ and $au\neq 0$.

The one-step transition is
\begin{equation}\label{eqn:onestep_transition}
    q(x_{n+1}\mid x_n,r_0)
=
\mathcal{N}(\rho x_n+b_{r_0+n},\sigma^2 I_d).
\end{equation}
Thus the forward process remains Gaussian and tractable, as in the standard DDPM, but its drift contains an explicit periodic component.

\begin{lemma}[Forward marginal]\label{lem:forward} Consider the phase-indexed forward process defined in \eqref{eqn:pou} with initial phase $r_0$.
For every $n\geq 1$, the conditional forward marginal satisfies
\begin{equation*}
    q(x_n\mid x_0,r_0)
    =
   \mathcal{N}(m_{n,r_0}(x_0),\Sigma_n),
\end{equation*}
where
\begin{equation*}
    m_{n,r_0}(x_0)
    :=
    \rho^n x_0
    +
    \sum_{j=0}^{n-1}
    \rho^{n-1-j} b_{r_0+j},
    \qquad
    \Sigma_n
    :=
    \sigma^2
    \frac{1-\rho^{2n}}{1-\rho^2}I_d.
\end{equation*}
\end{lemma}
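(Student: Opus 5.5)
I will prove the statement by induction on $n$, using the one-step transition \eqref{eqn:onestep_transition} and the fact that an affine image of a Gaussian plus independent Gaussian noise is Gaussian. Alternatively, and more transparently, I will first unroll the recursion \eqref{eqn:pou} to obtain the explicit representation
\begin{equation*}
x_n = \rho^n x_0 + \sum_{j=0}^{n-1}\rho^{n-1-j} b_{r_0+j} + \sigma\sum_{j=0}^{n-1}\rho^{n-1-j}\varepsilon_{j+1},
\end{equation*}
and then read off the conditional law given $x_0$ and $r_0$. The unrolled formula itself is verified by induction. For $n=1$ it is exactly \eqref{eqn:pou}. For the inductive step I substitute the formula for $x_n$ into $x_{n+1}=\rho x_n + b_{r_0+n}+\sigma\varepsilon_{n+1}$. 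This multiplies every existing term by $\rho$, shifting each exponent $n-1-j$ to $n-j$, while the new forcing and noise terms appear with coefficient $\rho^0$.

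Conditional on $x_0$ (and the fixed phase $r_0$), the only randomness is in $\varepsilon_1,\dots,\varepsilon_n$. These are i.i.d. $\mathcal{N}(0,I_d)$ and independent of $x_0$; I will make this standing assumption explicit, since it is implicit in the Markov transition \eqref{eqn:onestep_transition}. The noise term is therefore a linear combination of independent centered Gaussians, hence Gaussian with mean zero and covariance $\sigma^2\sum_{j=0}^{n-1}\rho^{2(n-1-j)} I_d$. The deterministic part gives the mean $m_{n,r_0}(x_0)$ exactly as stated. Periodicity of $b$ plays no role here beyond interpreting indices modulo $P$.

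The last step is the geometric sum $\sum_{k=0}^{n-1}\rho^{2k} = (1-\rho^{2n})/(1-\rho^2)$, which is valid since $0<\rho<1$ gives $\rho^2\neq 1$. This yields $\Sigma_n$. There is no real obstacle. The only point needing care is the index bookkeeping in the inductive step, namely checking that the forcing applied at step $j$ is $b_{r_0+j}$ and carries the factor $\rho^{n-1-j}$, together with justifying independence of the noise from $x_0$ so that conditioning does not alter its law.
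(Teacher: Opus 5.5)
Your proposal is correct and follows the same route as the paper. Both unroll the recursion, observe that conditional on $(x_0,r_0)$ the noise term is a linear combination of independent Gaussians, and evaluate the covariance by the geometric sum. Your explicit induction for the unrolled formula, and your explicit assumption that the noise is independent of $x_0$, make precise two points the paper leaves implicit.
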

The proof of Lemma \ref{lem:forward} is deferred to Appendix \ref{ssec:prooflem1}. The lemma shows that the periodic forcing changes only the mean of the forward marginal. The covariance remains isotropic and admits the same closed-form geometric structure as an OU-type process. For simplicity in the experiments, we use the normalized noise scale
    $\sigma=\sqrt{1-\rho^2}$,
so that
   $ \Sigma_n=(1-\rho^{2n})I_d$.
{
\begin{theorem}[Periodic limiting family and phase-wise convergence]
\label{thm:limit}
Consider the phase-indexed forward process defined in \eqref{eqn:pou} with generic initial phase $r$.

Then there exists a unique $P$-periodic Gaussian family
\begin{equation*}
    \{\mu_r^\ast\}_{r=0}^{P-1},
    \qquad
    \mu_r^\ast=\mathcal{N}(m_r^\ast,\Sigma^\ast),
    \qquad
    \mu_{r+P}^\ast=\mu_r^\ast,
\end{equation*}
with
\begin{equation}
    m_r^\ast
    =
    \sum_{j=0}^{\infty}
    \rho^j b_{r-1-j},
    \qquad
    \Sigma^\ast
    =
    \frac{\sigma^2}{1-\rho^2}I_d .
    \label{eq:periodic_terminal_mean_cov}
\end{equation}
The family is nonconstant whenever the periodic forcing produces at least two
distinct limiting means. One forward step advances the family by one phase:
\begin{equation*}
    x_n\sim\mu_r^\ast
    \quad\Longrightarrow\quad
    x_{n+1}\sim\mu_{r+1}^\ast ,
\end{equation*}
where phase indices are understood modulo $P$.

Moreover, for any initial distribution $\nu_0$ with finite second moment and
initial phase $r_0$,
\begin{equation}
    \lim_{n\to \infty}
    \mathcal{W}_1
    \left(
        \mathcal{L}(x_n),
        \mu_{r_0+n\,(\mathrm{mod}\,P)}^\ast
    \right)
    =0,
    \label{eq:w1_phase_convergence}
\end{equation}
where, for any pair of probability measures $\mu$ and $\nu$ on $\mathbb{R}^d$ with finite first moments,  $\mathcal{W}_1$ denotes
the Wasserstein-1 distance between them  over the set of all their couplings $\Gamma(\mu,\nu)$:
\begin{equation*}
    \mathcal{W}_1(\mu,\nu)
    :=
    \inf_{\gamma\in\Gamma(\mu,\nu)}
    \int_{\mathbb{R}^d\times\mathbb{R}^d}
        |x-y|\,\gamma(\mathrm{d}x,\mathrm{d}y).
\end{equation*}
Equivalently, the law of the forward process converges along each phase subsequence to the corresponding
member of the periodic terminal family. That is, for each fixed target
phase $r\in\{0,\ldots,P-1\}$,
\begin{equation*}
    \mathcal{L}(x_{n_\ell})
    \Rightarrow
    \mu_r^\ast
    \qquad
    \text{as } \ell\to\infty ,
\end{equation*}
whenever $r_0+n_\ell\equiv r\pmod{P}$.
\end{theorem}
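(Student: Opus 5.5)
We will work directly from the closed-form marginal of Lemma~\ref{lem:forward} and a synchronous coupling. First, existence and the advancing property. The series defining $m_r^\ast$ converges absolutely because $|b_n|\le B:=\max_{0\le k<P}|b_k|$ and $0<\rho<1$. It is $P$-periodic in $r$ because $b$ is. A shift of the summation index gives the fixed-point relation
\begin{equation*}
    \rho\, m_r^\ast + b_r = \sum_{j=0}^{\infty}\rho^{j+1} b_{r-1-j} + b_r = m_{r+1}^\ast .
\end{equation*}
For the covariance we have $\rho^2\Sigma^\ast+\sigma^2 I_d=\Sigma^\ast$. Now suppose $x_n\sim\mu_r^\ast$ and the forcing applied at step $n$ is $b_r$. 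Then $x_{n+1}=\rho x_n+b_r+\sigma\varepsilon_{n+1}$ is an affine image of a Gaussian plus independent Gaussian noise. Hence it is Gaussian with mean $m_{r+1}^\ast$ and covariance $\Sigma^\ast$, that is, $x_{n+1}\sim\mu_{r+1}^\ast$.

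Next, uniqueness. Suppose $\{\tilde\mu_r\}$ is another $P$-periodic family, Gaussian or not, with finite first moments and the same advancing property. Iterating $P$ steps shows that $\tilde\mu_r$ is invariant for the $P$-step map $\Phi_r$, where $\Phi_r(\nu)$ is the law after $P$ steps started from $\nu$ at phase $r$. We couple two copies of the recursion driven by the same noise $(\varepsilon_n)$. Their difference evolves deterministically, $x_{n+1}-y_{n+1}=\rho(x_n-y_n)$, because the forcing and the noise cancel. This gives $\mathcal W_1(\Phi_r\nu,\Phi_r\nu')\le\rho^P\,\mathcal W_1(\nu,\nu')$. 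The map $\Phi_r$ is therefore a strict contraction on the space of measures with finite first moment, so its fixed point is unique and $\tilde\mu_r=\mu_r^\ast$. We will also record the nonconstancy claim: it holds by hypothesis, since the family is nonconstant as soon as two of the $m_r^\ast$ differ.

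Then convergence. Let $x_0\sim\nu_0$ at phase $r_0$, and let $y_0\sim\mu_{r_0}^\ast$ be independent of $x_0$ and of the noise, run with the same noise and the same forcing. By the advancing property, $y_n\sim\mu_{r_0+n}^\ast$. Since $x_n-y_n=\rho^n(x_0-y_0)$, we get
\begin{equation*}
    \mathcal W_1\bigl(\mathcal L(x_n),\mu^\ast_{r_0+n}\bigr)\le \mathbb E|x_n-y_n|\le \rho^n\bigl(\mathbb E|x_0|+\mathbb E|y_0|\bigr)\to 0 .
\end{equation*}
Here $\mathbb E|x_0|<\infty$ follows from the finite second moment of $\nu_0$. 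Alternatively, one can compare the Gaussian law of Lemma~\ref{lem:forward} conditioned on $x_0$ directly with $\mu^\ast_{r_0+n}$. The mean gap is $\rho^n x_0$ plus a tail $\sum_{j\ge n}\rho^j b_{\cdot}$ of size $O(\rho^n B/(1-\rho))$. The covariance gap $\Sigma^\ast-\Sigma_n=\sigma^2\rho^{2n}(1-\rho^2)^{-1}I_d$ contributes $O(\rho^n)$ to $\mathcal W_1$. Integrating over $x_0$ then gives the same bound. This route needs care to check that $m_{n,r_0}(x_0)-\rho^n x_0$ equals $m^\ast_{r_0+n}$ up to the tail; the index relation is $\sum_{j=0}^{n-1}\rho^{n-1-j}b_{r_0+j}=\sum_{k=0}^{n-1}\rho^k b_{r_0+n-1-k}$.

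Finally, the subsequence statement. Along $n_\ell$ with $r_0+n_\ell\equiv r\pmod P$, the target measure is the fixed $\mu_r^\ast$, so $\mathcal W_1(\mathcal L(x_{n_\ell}),\mu_r^\ast)\to0$. Convergence in $\mathcal W_1$ implies weak convergence. The converse direction of the claimed equivalence holds because there are only finitely many phases: if every phase subsequence converges in $\mathcal W_1$, then so does the full sequence. I expect the main obstacle to be phase bookkeeping rather than analysis. We must check that the convention $m_r^\ast=\sum_j\rho^j b_{r-1-j}$ matches the statement that "phase $r$" means the forcing $b_r$ is applied at the next step, so that the target is $\mu^\ast_{r_0+n}$ and not an off-by-one shift. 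For that reason we will verify the recursion $\rho m_r^\ast+b_r=m_{r+1}^\ast$ first and align all indices to it.
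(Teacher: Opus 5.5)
Your proof is correct, but it takes a different route from the paper's.

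The paper \emph{constructs} the family by a pull-back argument. It extends the noise to all integer times and defines the random periodic solution $x_n^\ast=\sum_{j\ge 0}\rho^j b_{n-1-j}+\sigma\sum_{j\ge 0}\rho^j\varepsilon_{n-j}$, then sets $\mu_r^\ast=\mathcal{L}(x_n^\ast)$. It reads off the phase-advance property from the pathwise identity $x_{n+1}^\ast=\rho x_n^\ast+b_n+\sigma\varepsilon_{n+1}$. It proves attraction by coupling $x_n$ with $x_n^\ast$ through the same future noise, with the $L^2$ bound $\rho^{2n}\mathbb{E}|x_0-x_0^\ast|^2$.

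You instead take the Gaussian candidate as given and verify it through the fixed-point relations $\rho m_r^\ast+b_r=m_{r+1}^\ast$ and $\rho^2\Sigma^\ast+\sigma^2 I_d=\Sigma^\ast$. Your coupling with an independent $y_0\sim\mu_{r_0}^\ast$ is the same synchronous coupling in distributional form: the paper's $x_0^\ast$ also has law $\mu_0^\ast$ and is independent of the future noise. You run it in $L^1$, so only a first moment is needed.

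The paper's route explains where the formula for $m_r^\ast$ comes from and produces an actual pathwise random periodic solution, which matches its motivation. Yours is more elementary and gives a stronger uniqueness statement. The paper only excludes other \emph{Gaussian} periodic families, by solving the one-period mean and covariance recursions. Your $\rho^P$-contraction of the $P$-step map in $\mathcal{W}_1$ excludes every periodic family with finite first moments propagated by the same kernels, and it reuses the convergence coupling.

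Two minor remarks.
\begin{itemize}
\item The paper also proves that nonconstant forcing forces nonconstant means. You get this in one line from your fixed-point relation: if $m_r^\ast\equiv m$, then $b_r=(1-\rho)m$ would be constant. It is worth adding, since nonconstancy of $(b_n)$ is the standing assumption on the forward process.
\item Your converse remark concerns $\mathcal{W}_1$ convergence along phase subsequences, not the weak convergence written in the statement. Weak convergence alone would need uniform integrability to be upgraded. The paper also proves only the forward implication, so nothing is lost.
\end{itemize}
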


The proof of Theorem~\ref{thm:limit} is deferred to Appendix~\ref{ssec:proofthm1}. This theorem gives the main structural property of the proposed forward
process. Since $0<\rho<1$, the effect of the initial condition is exponentially
damped. Equivalently, one may view the limiting law by starting the recursion
far in the past and letting the starting time tend to $-\infty$; this is the
pull-back viewpoint. In this limit, the initial condition disappears, but the
periodic forcing leaves a phase-dependent contribution to the mean. The
terminal reference object is therefore not a single Gaussian law, but a
$P$-periodic family of Gaussian laws.
}
\subsection{Reverse posterior and phase-conditioned reverse model}\label{sec:reverse}

Since both the one-step transition and the forward marginal are Gaussian, the
reverse posterior is also Gaussian. In the phase-indexed forward process
\eqref{eqn:pou}, the forcing at the transition from $x_n$ to $x_{n+1}$ is
$b_{r_0+n}$, where $r_0$ is the initial phase of the data sample. Hence the
reverse posterior is conditioned on both $x_0$ and $r_0$.

\begin{lemma}[Gaussian reverse posterior]
\label{lem:reverse_posterior}
For the phase-indexed forward process \eqref{eqn:pou} with $r=r_0$ and $n\geq 1$, the reverse posterior
has the form
\begin{equation*}
    q(x_n\mid x_{n+1},x_0,r_0)
    =
    \mathcal{N}
    \left(
        \widetilde{\mu}_{n,r_0}(x_{n+1},x_0),
        \widetilde{\Sigma}_n
    \right),
\end{equation*}
where
\begin{equation*}
    \widetilde{\Sigma}_n
    =
    \sigma^2
    \frac{1-\rho^{2n}}{1-\rho^{2(n+1)}}I_d,
\end{equation*}
and
\begin{equation*}
    \widetilde{\mu}_{n,r_0}(x_{n+1},x_0)
    =
    \widetilde{\Sigma}_n
    \left(
        \Sigma_n^{-1}m_{n,r_0}(x_0)
        +
        \frac{\rho}{\sigma^2}
        \left(
            x_{n+1}-b_{r_0+n}
        \right)
    \right).
\end{equation*}

Equivalently, using the forward noise variable $\varepsilon$ defined by
\begin{equation*}
    x_{n+1}
    =
    m_{n+1,r_0}(x_0)
    +
    \sigma
    \sqrt{
        \frac{1-\rho^{2(n+1)}}{1-\rho^2}
    }
    \,\varepsilon,
    \qquad
    \varepsilon\sim\mathcal{N}(0,I_d),
\end{equation*}
the posterior mean can be written as
\begin{equation*}
    \widetilde{\mu}_{n,r_0}(x_{n+1},\varepsilon)
    =
    \frac{1}{\rho}
    \left(
        x_{n+1}-b_{r_0+n}
    \right)
    -
    \kappa_n\varepsilon,
\end{equation*}
where
\begin{equation*}
    \kappa_n
    =
    \frac{
        \sigma\sqrt{1-\rho^2}
    }{
        \rho\sqrt{1-\rho^{2(n+1)}}
    }.
\end{equation*}
\end{lemma}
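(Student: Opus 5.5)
The plan is to use Bayes' rule together with the Markov structure of \eqref{eqn:pou}, and then complete the square in the Gaussian exponent. Given $x_n$, the variable $x_{n+1}$ is independent of $x_0$ because the innovation $\varepsilon_{n+1}$ is independent of the past. Hence
\begin{equation*}
q(x_n\mid x_{n+1},x_0,r_0)\;\propto\; q(x_{n+1}\mid x_n,r_0)\,q(x_n\mid x_0,r_0),
\end{equation*}
where the first factor is the one-step transition \eqref{eqn:onestep_transition}, $\mathcal{N}(\rho x_n+b_{r_0+n},\sigma^2 I_d)$, and the second is $\mathcal{N}(m_{n,r_0}(x_0),\Sigma_n)$ by Lemma~\ref{lem:forward}.

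As a function of $x_n$, the log of this product is quadratic. The quadratic coefficient gives the posterior precision $\Sigma_n^{-1}+\frac{\rho^2}{\sigma^2}I_d$. The linear coefficient gives $\Sigma_n^{-1}m_{n,r_0}(x_0)+\frac{\rho}{\sigma^2}(x_{n+1}-b_{r_0+n})$. Inverting the precision and multiplying it against the linear term yields $\widetilde{\mu}_{n,r_0}$ exactly in the stated form. For the covariance, note that $\Sigma_n^{-1}=\frac{1-\rho^2}{\sigma^2(1-\rho^{2n})}I_d$. Adding $\rho^2/\sigma^2$ and placing both terms over a common denominator, the numerator is $(1-\rho^2)+\rho^2(1-\rho^{2n})=1-\rho^{2(n+1)}$. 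This gives $\widetilde{\Sigma}_n=\sigma^2\frac{1-\rho^{2n}}{1-\rho^{2(n+1)}}I_d$.

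For the noise parametrization, two facts are used. The first is the mean recursion $m_{n+1,r_0}(x_0)=\rho\, m_{n,r_0}(x_0)+b_{r_0+n}$, which follows directly from the explicit sum in Lemma~\ref{lem:forward}. The second is shorthand for the marginal standard deviation, $s_{n+1}:=\sigma\sqrt{(1-\rho^{2(n+1)})/(1-\rho^2)}$.

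With these in hand, write $x_{n+1}-b_{r_0+n}=\rho\, m_{n,r_0}+s_{n+1}\varepsilon$ and substitute into the posterior mean. Since $\widetilde{\Sigma}_n(\Sigma_n^{-1}+\rho^2/\sigma^2)=I$, the mean simplifies to
\begin{equation*}
\widetilde{\mu}_{n,r_0}=m_{n,r_0}+\widetilde{\Sigma}_n\frac{\rho\, s_{n+1}}{\sigma^2}\varepsilon .
\end{equation*}
Next, eliminate $m_{n,r_0}$ using $m_{n,r_0}=(x_{n+1}-b_{r_0+n}-s_{n+1}\varepsilon)/\rho$. This gives $\frac{1}{\rho}(x_{n+1}-b_{r_0+n})-s_{n+1}\bigl(\frac{1}{\rho}-\frac{\rho(1-\rho^{2n})}{1-\rho^{2(n+1)}}\bigr)\varepsilon$. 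The bracket collapses to $\frac{1-\rho^2}{\rho(1-\rho^{2(n+1)})}$. Multiplying by $s_{n+1}$ then gives exactly $\kappa_n=\frac{\sigma\sqrt{1-\rho^2}}{\rho\sqrt{1-\rho^{2(n+1)}}}$.

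The only real care point is this last algebraic simplification. In particular, one must make sure the periodic forcing term $b_{r_0+n}$ cancels consistently; it does so via the mean recursion, which is why the phase enters the final formula only through the shift $x_{n+1}-b_{r_0+n}$. Everything else is routine Gaussian conjugacy. Because all covariances are isotropic, no matrix inversion issues arise.
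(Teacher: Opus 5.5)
Your proposal is correct and follows essentially the same route as the paper: the Markov factorization via Bayes' rule, completing the square to get the precision $\Sigma_n^{-1}+\rho^2\sigma^{-2}I_d$, then using the mean recursion $m_{n+1,r_0}=\rho\, m_{n,r_0}+b_{r_0+n}$ to compare $\widetilde{\mu}_{n,r_0}$ with $\frac{1}{\rho}(x_{n+1}-b_{r_0+n})$ and read off $\kappa_n$. The only difference is minor: the paper re-derives the mean through the Gaussian regression formula $m_{n,r_0}+\rho\Sigma_n\Sigma_{n+1}^{-1}(x_{n+1}-m_{n+1,r_0})$, whereas you substitute the noise representation directly into the precision-form mean, which is slightly more self-contained.
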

This noise form is directly analogous to the parameterization used in denoising
diffusion models. It allows us to train a neural network to predict the Gaussian
noise while preserving the periodic forward structure. The proof of Lemma \ref{lem:reverse_posterior} is deferred to Appendix \ref{ssec:prooflem2}. 

{For the phase-conditioned reverse model, let
$r_n = r_0+n \ (\mathrm{mod}\, P)$
denote the phase of the noisy variable $x_n$ at diffusion depth $n$. Since the
phase variable is circular, we encode it using $ \phi(r)
    =
    \left(
        \sin\frac{2\pi r}{P},
        \cos\frac{2\pi r}{P}
    \right)$. 
The learned reverse kernel for the step from depth $n$ to depth $n-1$ is
\begin{equation*}
    p_\theta(x_{n-1}\mid x_n,n,r_n)
    =
    \mathcal{N}
    \left(
        \mu_\theta(x_n,n,r_n),
        \widetilde{\Sigma}_{n-1}
    \right),
\end{equation*}
where
\begin{equation*}
    \mu_\theta(x_n,n,r_n)
    =
    \frac{1}{\rho}
    \left(
        x_n-b_{r_n-1}
    \right)
    -
    \kappa_{n-1}
    \varepsilon_\theta(x_n,n,\phi(r_n)).
\end{equation*}
Here $r_n-1$ is understood modulo $P$, and $\kappa_{n-1}$ is the reverse-noise coefficient from Lemma~\ref{lem:reverse_posterior}.}

The phase-conditioned network learns a family of reverse dynamics indexed by
the current phase $r_n$. This is more structured than merely adding an
arbitrary time embedding: the phase corresponds to the member
$\mu_{r_n}^\ast$ of the limiting periodic family.
\subsection{Invariant-average principle}\label{sec:inv}

The phase-conditioned reverse model should not learn $P$ unrelated reverse processes. The periodic family should remain tied to a coherent averaged reference law. This motivates an invariant-average regularization.

Since the terminal family is periodic rather than stationary, no single phase
law $\mu_r^\ast$ is invariant under one forward step on $\mathbb{R}^d$. To make
the invariant object explicit, we include the phase as part of the state and
consider the lifted Markov chain on
$\{0,\ldots,P-1\}\times\mathbb{R}^d$.

{
\begin{lemma}[Invariant average on the lifted phase space]\label{lem:inv}
Let $\{\mu_r^\ast\}_{r=0}^{P-1}$ be the $P$-periodic terminal family from
Theorem~\ref{thm:limit}, and let $\{K_r\}_{r=0}^{P-1}$ be Markov kernels
satisfying
\begin{equation*}
    K_r\mu_r^\ast=\mu_{r+1}^\ast,
    \qquad r=0,\ldots,P-1,
\end{equation*}
with indices understood modulo $P$. Define the lifted Markov kernel on
$\{0,\ldots,P-1\}\times\mathbb{R}^d$ by
\begin{equation*}
    \mathcal{K}\big((r,x),\{r+1\}\times A\big)
    =
    K_r(x,A).
\end{equation*}
Then
\begin{equation*}
    \Pi^\ast
    =
    \frac{1}{P}
    \sum_{r=0}^{P-1}
    \delta_r\otimes \mu_r^\ast
\end{equation*}
is invariant under $\mathcal{K}$. Its marginal on $\mathbb{R}^d$ is
\begin{equation*}
    \bar{\mu}^\ast
    =
    \frac{1}{P}
    \sum_{r=0}^{P-1}\mu_r^\ast.
\end{equation*}
We call $\bar{\mu}^\ast$ the invariant-average reference law associated with the
periodic terminal family. It is a phase mixture on $\mathbb{R}^d$, not generally
a single Gaussian law or an invariant law for a phase-forgetting one-step kernel.
\end{lemma}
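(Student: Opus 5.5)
We verify invariance directly: we show that $\Pi^\ast\mathcal{K}=\Pi^\ast$ as measures on $\{0,\ldots,P-1\}\times\mathbb{R}^d$. Since that space is a finite disjoint union of copies of $\mathbb{R}^d$, it suffices to check equality on product sets of the form $\{s\}\times A$, with $s\in\{0,\ldots,P-1\}$ and $A$ Borel. These sets form a $\pi$-system generating the product $\sigma$-algebra, and both sides are probability measures, so agreement on them gives equality everywhere.

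\textbf{Computing the pushforward.} Fix $s$ and $A$ and expand the action of $\mathcal{K}$ on $\Pi^\ast$:
\begin{equation*}
(\Pi^\ast\mathcal{K})(\{s\}\times A)
=\frac{1}{P}\sum_{r=0}^{P-1}\int_{\mathbb{R}^d}\mathcal{K}\big((r,x),\{s\}\times A\big)\,\mu_r^\ast(\mathrm{d}x).
\end{equation*}
By the definition of the lifted kernel, $\mathcal{K}((r,x),\cdot)$ puts all its mass on the fibre $\{r+1\}\times\mathbb{R}^d$. Hence $\mathcal{K}((r,x),\{s\}\times A)=K_r(x,A)\,\mathbf{1}\{r+1\equiv s \pmod P\}$.

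\textbf{Reducing to one term.} Only the index $r=s-1 \pmod P$ survives in the sum. Using the hypothesis $K_{s-1}\mu_{s-1}^\ast=\mu_s^\ast$, we obtain
\begin{equation*}
(\Pi^\ast\mathcal{K})(\{s\}\times A)=\frac{1}{P}\,(K_{s-1}\mu_{s-1}^\ast)(A)=\frac{1}{P}\,\mu_s^\ast(A)=\Pi^\ast(\{s\}\times A).
\end{equation*}
For $s=0$ this step uses $r=P-1$ and relies on the periodicity $\mu_P^\ast=\mu_0^\ast$ from Theorem~\ref{thm:limit}. This wrap-around is the only point where periodicity enters, and it is the step I would take most care over. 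The well-definedness of the lifted kernel as a Markov kernel is immediate, because each $K_r(x,\cdot)$ is a probability measure and the phase map $r\mapsto r+1$ is deterministic.

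\textbf{The marginal.} Projecting $\Pi^\ast$ onto $\mathbb{R}^d$ gives $\Pi^\ast(\{0,\ldots,P-1\}\times A)=\frac1P\sum_r\mu_r^\ast(A)=\bar\mu^\ast(A)$.

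\textbf{The closing remark.} For the last sentence of the lemma, a remark suffices. $\bar\mu^\ast$ is an equal-weight mixture of Gaussians $\mathcal{N}(m_r^\ast,\Sigma^\ast)$ that share a covariance. By Theorem~\ref{thm:limit}, at least two of the means $m_r^\ast$ are distinct in the nonconstant case. Such a mixture is not Gaussian: along a direction separating two distinct means, its one-dimensional projection is a nondegenerate mixture of normals with common variance, and by identifiability of Gaussian mixtures (e.g.\ comparing characteristic functions) this is not a single normal. The statement only says ``not generally'', so this example-level justification is enough.
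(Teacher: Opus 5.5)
Your proof is correct and is essentially the paper's argument. The paper tests invariance against a bounded measurable $f$ and relabels $s=r+1$ modulo $P$; your computation is the same one with $f=\mathbf{1}_{\{s\}\times A}$, where the relabelling becomes your observation that only $r=s-1$ survives, and the marginal computation is identical. Your Gaussian-mixture argument that $\bar{\mu}^\ast$ is not Gaussian goes beyond the paper, which leaves the lemma's closing sentence as unproved commentary. Neither you nor the paper justifies the phase-forgetting clause, which is acceptable given the ``not generally'' phrasing.
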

}
Thus $\bar{\mu}^\ast$ should be understood as the state-space marginal of an
invariant measure on the lifted phase space. It is not generally itself an
invariant law for a phase-forgetting one-step Markov kernel on $\mathbb{R}^d$.

This result provides the theoretical basis for the regularization term used below. It enforces that the learned periodic family has a meaningful invariant average rather than arbitrary phase-wise behavior. The proof of Lemma \ref{lem:inv} is deferred to Appendix \ref{ssec:prooflem3}. 

\subsection{Training objective and algorithms}
The preceding results give an explicit Gaussian noising process with a
phase-indexed terminal reference law. We now turn this construction into a
trainable diffusion model. The key point is that, unlike a standard DDPM whose
forward process contracts toward a single isotropic Gaussian, the PTL forward
process contracts toward a periodic family of Gaussian laws indexed by phase.
Training therefore requires two pieces of information: the diffusion depth and
the current phase of the noisy sample. The denoising network is trained to
predict the Gaussian noise in the closed-form forward marginal, while the
invariant-average term couples the phase-conditioned denoisers so that they do
not behave as unrelated models across phases.

The phase-conditioned denoising loss is
\begin{equation*}
    \mathcal{L}_{\mathrm{phase}}
    =
    \mathbb{E}_{x_0,\varepsilon,n,r_0}
    \left[
    \left|
        \varepsilon
        -
        \varepsilon_{\theta}(x_n,n,\phi(r_n))
    \right|^2
    \right],
    \qquad
    r_n=r_0+n\ (\mathrm{mod}\,P) .
\end{equation*}
This term trains the model to recover the reverse dynamics at each phase. For
the invariant-average regularization, we use centered periodic forcing satisfying
$P^{-1}\sum_{s=0}^{P-1}b_s=0$; the data-adapted construction below enforces this condition.
The regularization is a shared-noise phase-average penalty motivated by Lemma~\ref{lem:inv}:
\begin{equation}\label{eqn:avgloss}
    \mathcal{L}_{\mathrm{avg}}
    =
    \mathbb{E}_{x_0,\varepsilon,n,r_0}
    \left[
    \left|
        \frac{1}{P}
        \sum_{s=0}^{P-1}
        \varepsilon_\theta(x_n,n,\phi(s))
        -
        \varepsilon
    \right|^2
    \right],
\end{equation}
Although the reverse model is phase-conditioned, the phase-wise terminal laws
should remain coupled through the averaged reference law $\bar{\mu}^\ast$.
The full objective is
\begin{equation}\label{eqn:fullloss}
    \mathcal{L}
    =
    \mathcal{L}_{\mathrm{phase}}
    +
    \lambda\mathcal{L}_{\mathrm{avg}},
    \qquad
    \lambda>0 .
\end{equation}
The first term learns the phase-specific reverse dynamics. The second term couples the phases by enforcing that their average recovers the invariant-average reverse structure. This distinguishes the proposed method from a standard diffusion model with an additional phase embedding. The derivation is deferred to Appendix \ref{ssec:derivation}.

\paragraph{Data-adapted periodic forcing.}
For manifold-supported data, the periodic forcing can be constructed from
empirical phase centers. Suppose each training sample $x_i$ is assigned a phase
$r_i\in\{0,\ldots,P-1\}$. Let
\begin{equation}
    c_r
    =
    \frac{1}{|\mathcal{I}_r|}
    \sum_{i\in\mathcal{I}_r} x_i,
    \qquad
    \mathcal{I}_r=\{i:r_i=r\}.
    \label{eq:empirical_phase_centres}
\end{equation}
To ensure compatibility with the invariant-average principle, we work in
centered coordinates. Let
\begin{equation*}
    \bar{c}
    =
    \frac{1}{P}\sum_{r=0}^{P-1} c_r,
    \qquad
    \tilde{c}_r=c_r-\bar{c}.
\end{equation*}
We then define
\begin{equation}
    b_r
    =
    \tilde{c}_{r+1}-\rho \tilde{c}_r,
    \qquad
    b_{r+P}=b_r .
    \label{eq:data_adapted_forcing}
\end{equation}
 This choice makes the deterministic part of the forward dynamics transport
centered phase centers cyclically: if $x_k=\tilde{c}_r$, then
$\rho x_k+b_r=\tilde{c}_{r+1}$. Moreover,
\begin{equation}
    \frac{1}{P}\sum_{r=0}^{P-1} b_r=0,
    \label{eq:zero_mean_forcing}
\end{equation}
which is the condition needed for the invariant-average regularization. In particular, the data-adapted forcing defined in
Eq.~\eqref{eq:data_adapted_forcing} satisfies the assumptions of Theorem \ref{thm:limit}
whenever the centered empirical phase centers are not all identical.

The algorithms below summarize the implementation of this construction.
Algorithm~\ref{alg:ptl_forward} uses the closed-form forward marginal from
Lemma~\ref{lem:forward}, so noisy samples can be generated directly at an
arbitrary diffusion depth without iterating through all intermediate steps.
Algorithm~\ref{alg:ptl_sampling} implements the learned reverse chain. At each
reverse step, the network is conditioned on the phase of the current noisy
state, while the deterministic forcing term corresponds to the phase of the
previous forward transition.

\begin{algorithm}[t]
\caption{PTL forward noising process}
\label{alg:ptl_forward}
\begin{algorithmic}[1]
\Require Training data $\{x_i,r_i\}_{i=1}^{N}$, period $P$, contraction
$\rho\in(0,1)$, maximum diffusion depth $K$.

\State Estimate phase centers $c_r$ via Eq.~\eqref{eq:empirical_phase_centres}.

\State Define periodic forcing $b_r$ via Eq.~\eqref{eq:data_adapted_forcing}.

\State Precompute offsets
\begin{equation*}
    a_{0,r}=0,
    \qquad
    a_{k+1,r}
    =
    \rho a_{k,r}+b_{r+k},
    \qquad
    k=0,\ldots,K-1 .
\end{equation*}

\State Sample $(x_0,r_0)$ from the training set, $k\sim \mathrm{Unif}\{1,\ldots,K\}$,
and $\varepsilon\sim \mathcal{N}(0,I_d)$.

\State Generate the noisy sample by the direct PTL marginal
\begin{equation*}
    x_k
    =
    \rho^k x_0
    +
    a_{k,r_0}
    +
    \sqrt{1-\rho^{2k}}\,\varepsilon .
\end{equation*}

\State Set the current phase
\begin{equation*}
    r_k = r_0+k\ (\mathrm{mod}\,P) .
\end{equation*}

\State \Return $(x_k,r_k,\varepsilon)$.
\end{algorithmic}
\end{algorithm}
Algorithm~\ref{alg:ptl_forward} is justified by the closed-form marginal
\[
    x_k
    =
    \rho^k x_0
    +
    \sum_{j=0}^{k-1}\rho^{k-1-j}b_{r_0+j}
    +
    \sqrt{1-\rho^{2k}}\,\varepsilon,
\]
where we use the normalized choice $\sigma=\sqrt{1-\rho^2}$. The precomputed
offset $a_{k,r}$ is exactly the deterministic forcing contribution
\[
    a_{k,r}
    =
    \sum_{j=0}^{k-1}\rho^{k-1-j}b_{r+j}.
\]
Thus the algorithm samples exactly from $q(x_k\mid x_0,r_0)$.

\begin{algorithm}[t]
\caption{Sampling from PTL-Diffusion}
\label{alg:ptl_sampling}
\begin{algorithmic}[1]
\Require Trained denoiser $\varepsilon_\theta$, period $P$, diffusion depth $K$,
contraction $\rho$, periodic forcing $\{b_r\}_{r=0}^{P-1}$, reverse variance
schedule $\{\tilde{\sigma}_k^2\}_{k=0}^{K-1}$.
\State Sample an initial phase $r_0\sim \mathrm{Unif}\{0,\ldots,P-1\}$ and set
$r_K=r_0+K\ (\mathrm{mod}\,P)$.
\State Sample $x_K$ from the finite-depth terminal approximation
\[
    x_K\sim
    \mathcal{N}
    \left(
        a_{K,r_0},
        (1-\rho^{2K})I_d
    \right)
\]
with the terminal phase $r_K=r_0+K\ (\mathrm{mod}\,P)$.

\For{$k=K-1,K-2,\ldots,0$}
    \State Set $r_k = r_K - (K-k)\ (\mathrm{mod}\,P)$.
    \State Predict noise
    \begin{equation*}
        \hat{\varepsilon}
        =
        \varepsilon_\theta(x_{k+1},k+1,\phi(r_{k+1})).
    \end{equation*}
    \State Compute the reverse mean
    \begin{equation*}
        \mu_\theta(x_{k+1},k+1,r_{k+1})
        =
        \frac{1}{\rho}
        \left(
            x_{k+1}-b_{r_k}
        \right)
        -
        \kappa_k\hat{\varepsilon},
    \end{equation*}
    where $\kappa_k$ is the scalar coefficient from the Gaussian reverse posterior.
    \State Sample
    \begin{equation*}
        x_k
        =
        \mu_\theta(x_{k+1},k+1,r_{k+1})
        +
        \tilde{\sigma}_k z,
        \qquad
        z\sim \mathcal{N}(0,I_d),
    \end{equation*}
    with $z=0$ at the final step if deterministic sampling is used.
\EndFor
\State \Return $x_0$.
\end{algorithmic}
\end{algorithm}
Algorithm~\ref{alg:ptl_sampling} follows the Gaussian reverse parameterization
from Lemma~\ref{lem:reverse_posterior}. In the reverse step from $x_{k+1}$ to
$x_k$, the noisy input $x_{k+1}$ has phase $r_{k+1}$, so the denoiser is
conditioned on $\phi(r_{k+1})$. However, the forward transition from $x_k$ to
$x_{k+1}$ used the forcing $b_{r_k}$; hence the reverse mean subtracts
$b_{r_k}$. The coefficient $\kappa_k$ is the scalar multiplying the predicted noise
in the Gaussian reverse posterior, and $\tilde{\sigma}_k^2$ is the chosen
reverse variance.
\section{Experiments}\label{sec:experiment}

\subsection{Overview}

We evaluate PTL-Diffusion as a diffusion model for manifold-structured data. The
central hypothesis is that replacing the single Gaussian terminal law of DDPM by
a phase-indexed periodic terminal family gives the forward process a better
coarse geometry for data concentrated near a low-dimensional manifold.

We consider two experimental settings. First, we use synthetic manifold data,
where the phase coordinate and target geometry are known, to verify that PTL-Diffusion
recovers phase-conditioned structure and the invariant-average law. Second, we
evaluate on the Olivetti face dataset, where images lie near a low-dimensional
eigenface manifold. 

Within each dataset, DDPM, phase-conditioned DDPM, PTL-Diffusion, and
PTL-Diffusion without invariant-average regularization use the same denoising
backbone and training budget. Thus performance differences can be attributed to
the forward reference law, phase conditioning, and invariant-average
regularization rather than to model capacity.

We conducted all experiments on a server running Ubuntu 24.04.4 LTS 64-bit. 
The server features 376 GiB of memory and two Intel\textregistered{} Xeon\textregistered{} Silver 4108 CPUs @ 1.80 GHz with 16 physical cores and 32 logical CPUs, and an NVIDIA A100-PCIE-40GB GPU, providing a robust computational environment for the applications. 
The implementation was based on PyTorch 2.6.0 with CUDA 12.4 support.
\subsection{Models and ablations}
We compare the following models.

\begin{enumerate}
    \item DDPM. A standard diffusion baseline whose forward noising
    process converges to a single time-homogeneous Gaussian terminal reference
    law.
    \item Phase-conditioned DDPM (Phase-DDPM). A diffusion baseline with the same
    standard forward noising process as DDPM, so that its terminal reference
    law remains a single time-homogeneous Gaussian distribution. Unlike DDPM,
    its denoising network also receives the sinusoidal phase embedding as an
    additional input. This baseline tests whether any improvement comes merely
    from phase conditioning in the reverse model, rather than from the periodic
    terminal law used in PTL-Diffusion.
    \item PTL-Diffusion. The full proposed model uses the periodic
    forward law, sinusoidal phase embedding, and invariant-average
    regularization in \eqref{eqn:avgloss}. This is the main model evaluated in our proof-of-concept
    experiments.
    \item PTL-Diffusion without invariant-average regularization (PTL-Diffusion w/o reg.). The
    model uses both the periodic forward law and the phase embedding, but
    removes $\mathcal{L}_{\mathrm{avg}}$ defined in \eqref{eqn:avgloss}. This tests whether the phase-wise
    reverse models become unstable or incoherent without the invariant-average
    principle.
\end{enumerate}
The ablation study compares the last two models above.
\subsection{Manifold-based sample generation}
\label{subsec:manifold_sample_generation}

We first evaluate PTL-Diffusion on synthetic datasets supported near simple
embedded manifolds. These experiments are designed as controlled
proof-of-concept tests: the target geometry is known, an intrinsic cyclic
coordinate is available, and we can directly assess whether generated samples
remain close to the prescribed manifold. 
\paragraph{Torus point-cloud benchmark.}
For the torus benchmark, we generate i.i.d. samples on the embedded torus
$\mathbb{T}^2=S^1\times S^1\subset\mathbb{R}^4$. We sample
$\theta^{(1)}\sim \mathrm{Unif}(0,2\pi)$ and set
\[
    \theta^{(2)} = 2\theta^{(1)} + \xi \pmod{2\pi},
    \qquad \xi \sim \mathcal{N}(0,\sigma_\theta^2),
\]
before embedding each sample as
\[
    X
    =
    \left(
        \cos \theta^{(1)},
        \sin \theta^{(1)},
        \cos \theta^{(2)},
        \sin \theta^{(2)}
    \right)
    + \eta,
    \qquad \eta \sim \mathcal{N}(0,\sigma_x^2 I_4).
\]
The phase label is obtained by binning $\theta^{(1)}$ into $P$ cyclic phase bins.

\paragraph{Cylinder point-cloud benchmark.}
For the cylinder benchmark, we generate i.i.d. samples near the embedded
cylindrical surface
$S^1\times[-h/2,h/2]\subset\mathbb{R}^3$. We sample the angular and height
variables as
\[
    \theta \sim \mathrm{Unif}(0,2\pi),
    \qquad
    z \sim \mathrm{Unif}\left(-\frac{h}{2}, \frac{h}{2}\right),
\]
where $h$ denotes the cylinder height. Each sample is then embedded into
$\mathbb{R}^3$ by
\[
    X = (\cos\theta, \sin\theta, z) + \eta,
    \qquad
    \eta \sim \mathcal{N}(0,\sigma_x^2 I_3).
\]
The phase label is obtained by binning the $S^1$ coordinate $\theta$ into
$P$ cyclic phase bins.
\paragraph{Evaluation.}
The evaluation focuses on three aspects:
ambient distributional fidelity, manifold consistency, and phase recovery.

First, we report the coordinate-wise marginal Wasserstein-$1$ distance
\begin{equation*}
    E_{\mathcal{W}_1}
    =
    \frac{1}{d}
    \sum_{j=1}^{d}
    \mathcal{W}_1
    \left(
        \{x_i^{(j)}\}_{i=1}^{N},
        \{\hat{x}_i^{(j)}\}_{i=1}^{M}
    \right),
\end{equation*}
where $\{x_i\}_{i=1}^N$ are real samples and $\{\hat{x}_i\}_{i=1}^M$ are
generated samples. This measures marginal fidelity in the ambient Euclidean
coordinates.

Second, we compare dependence structure through the empirical correlation
error
\begin{equation*}
    E_{\mathrm{corr}}
    =
    \frac{1}{d^2}
    \left\|
        C_{\mathrm{real}} - C_{\mathrm{gen}}
    \right\|_1 ,
\end{equation*}
where $C_{\mathrm{real}}$ and $C_{\mathrm{gen}}$ are the empirical correlation
matrices of the real and generated samples, respectively, and
$\|A\|_1$ denotes the entrywise matrix $\ell^1$ norm, i.e.,
\[
    \|A\|_1=\sum_{i=1}^{d}\sum_{j=1}^{d}|A_{ij}|.
\]

Third, because the true manifolds are known, we directly measure manifold
constraint violations. For the torus, we define
\begin{equation*}
    E_{\mathrm{torus}}
    =
    \mathbb{E}_{\hat{x}}
    \left[
        \left|
            (\hat{x}_1^2+\hat{x}_2^2)-1
        \right|
        +
        \left|
            (\hat{x}_3^2+\hat{x}_4^2)-1
        \right|
    \right].
\end{equation*}
For the cylinder, we define
\begin{equation*}
    E_{\mathrm{cyl}}
    =
    \mathbb{E}_{\hat{x}}
    \left[
        \left|
            (\hat{x}_1^2+\hat{x}_2^2)-1
        \right|
    \right].
\end{equation*}
These quantities measure whether generated samples remain close to the
embedded torus or cylinder.

Finally, we evaluate phase recovery. For a generated torus sample, we infer the
first circular phase by
\begin{equation*}
    \hat{r}(\hat{x})
    =
    \left\lfloor
    \frac{P}{2\pi}
    \operatorname{atan2}(\hat{x}_2,\hat{x}_1)
    \right\rfloor
    \pmod{P} .
\end{equation*}
Let $p_{\mathrm{real}}(r)$ and $p_{\mathrm{gen}}(r)$ be the empirical phase
histograms. We report
\begin{equation*}
    E_{\mathrm{phase\text{-}TV}}
    =
    \frac{1}{2}
    \sum_{r=0}^{P-1}
    \left|
        p_{\mathrm{real}}(r)-p_{\mathrm{gen}}(r)
    \right|.
\end{equation*}
We also report the phase-conditioned distributional error
\begin{equation*}
    E_{\mathrm{phase}}
    =
    \frac{1}{P}
    \sum_{r=0}^{P-1}
    D(\nu_r,\hat{\nu}_r),
\end{equation*}
where $\nu_r$ and $\hat{\nu}_r$ are the real and generated laws conditioned on
phase $r$. In practice, $D$ is implemented as a moment-based or Wasserstein
distance in the observed embedding space.

The invariant-average principle is evaluated by
\begin{equation*}
    E_{\mathrm{avg}}
    =
    D
    \left(
        \frac{1}{P}\sum_{r=0}^{P-1}\nu_r,
        \frac{1}{P}\sum_{r=0}^{P-1}\hat{\nu}_r
    \right).
\end{equation*}
This metric directly tests whether the averaged generated phase family matches
the averaged real reference law.
\paragraph{Experiment setup.} For both datasets, we compare DDPM, Phase-DDPM, PTL-Diffusion without regularization, and full PTL-Diffusion. We use the same denoising backbone and
training budget for all models. For each dataset, 12000 samples were used for training and 3000 samples were used for evaluation. 
All models were trained for 60 epochs under the same training configuration to ensure a fair comparison. For PTL-diffusion and its variants, the periodic parameter is set to $P=30$. 
After training, generated samples were drawn from each model and compared with the target data distribution both visually and quantitatively. 
For the quantitative evaluation, the reported results were computed over 10 independent sampling runs and presented as mean $\pm$ standard deviation. Lower metric values indicate a closer match between the generated and target distributions. 
\paragraph{Model comparison.}
Fig.~\ref{fig:torus_cylinder} shows the generated samples on the torus and cylinder datasets after 60 training epochs. 
For both datasets, the conventional DDPM fails to preserve the underlying periodic geometry and tends to generate samples concentrated around a non-periodic Gaussian-like cloud. 
By introducing phase information into the denoising network, Phase-DDPM produces more structured samples than DDPM and partially recovers the circular patterns in both datasets. 
This indicates that phase conditioning itself provides useful information for learning periodic data distributions. 
However, Phase-DDPM still does not fully preserve the target manifold geometry, especially in the torus projections and the cylinder height distribution. 

In contrast, PTL-Diffusion produces samples that are much better aligned with the circular and periodic structures of the target distributions. 
On the torus dataset, PTL-Diffusion successfully recovers the ring-shaped projections on both $S^1$ components, whereas DDPM produces scattered samples without clear periodic structure. 
Similarly, on the cylinder dataset, PTL-Diffusion captures both the circular cross-section and the distribution along the cylinder height, while DDPM again loses the periodic structure and generates samples mainly around the center.

The quantitative results further support the visual observations. 
As shown in Tables~\ref{tab:torus_metrics} and~\ref{tab:cylinder_metrics}, Phase-DDPM improves upon DDPM on most synthetic metrics, confirming that adding phase conditioning to the denoising network is beneficial in this controlled setting. 
For example, on the torus dataset, $E_{\mathcal{W}_1}$ decreases from $0.159$ for DDPM to $0.109$ for Phase-DDPM, and $E_{\mathrm{phase}}$ decreases from $0.240$ to $0.167$. 
On the cylinder dataset, Phase-DDPM also reduces $E_{\mathcal{W}_1}$ from $0.148$ to $0.084$ and $E_{\mathrm{avg}}$ from $0.145$ to $0.080$. 
Nevertheless, the PTL-Diffusion variants achieve stronger overall performance, particularly on geometry-related metrics. 
For the torus dataset, PTL-Diffusion further reduces $E_{\mathcal{W}_1}$ to $0.043$, $E_{\mathrm{phase}}$ to $0.057$, and $E_{\mathrm{avg}}$ to $0.039$. 
For the cylinder dataset, PTL-Diffusion obtains a much lower cylinder constraint error than Phase-DDPM, reducing $E_{\mathrm{cyl}}$ from $0.341$ to $0.135$, and also achieves a lower overall average error of $0.076$. 
These results suggest that the improvement of the PTL-Diffusion variants cannot be explained merely by providing phase information to the neural network. 
Instead, the periodic forward formulation plays an important role in preserving the intrinsic geometry of periodic domains.

\begin{figure}[H]
\centering

\begin{subfigure}[t]{0.88\textwidth}
    \centering
    \includegraphics[width=\textwidth]{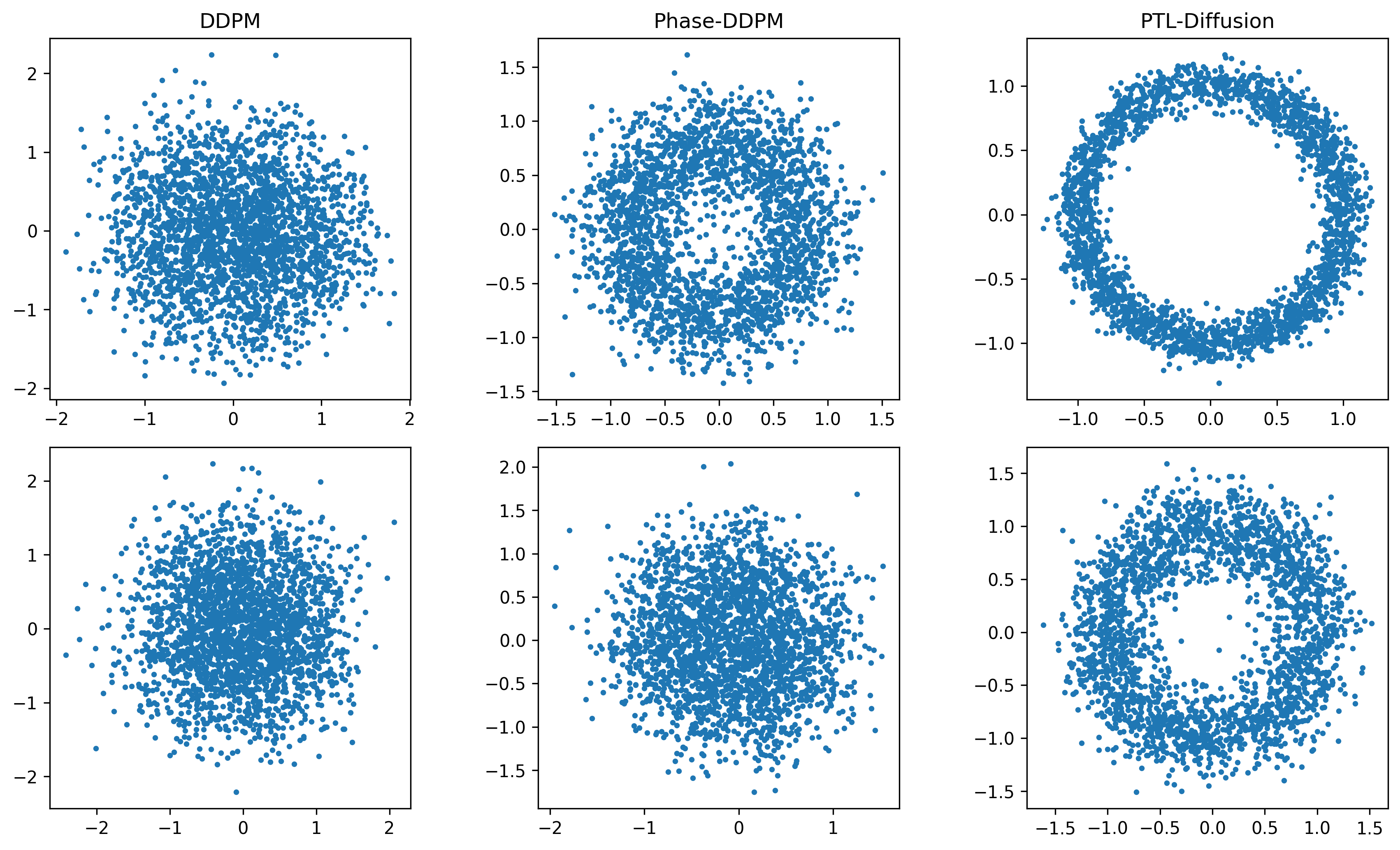}
    \caption{Torus dataset}
    \label{fig:torus_sub}
\end{subfigure}

\vspace{0.8em}

\begin{subfigure}[t]{0.88\textwidth}
    \centering
    \includegraphics[width=\textwidth]{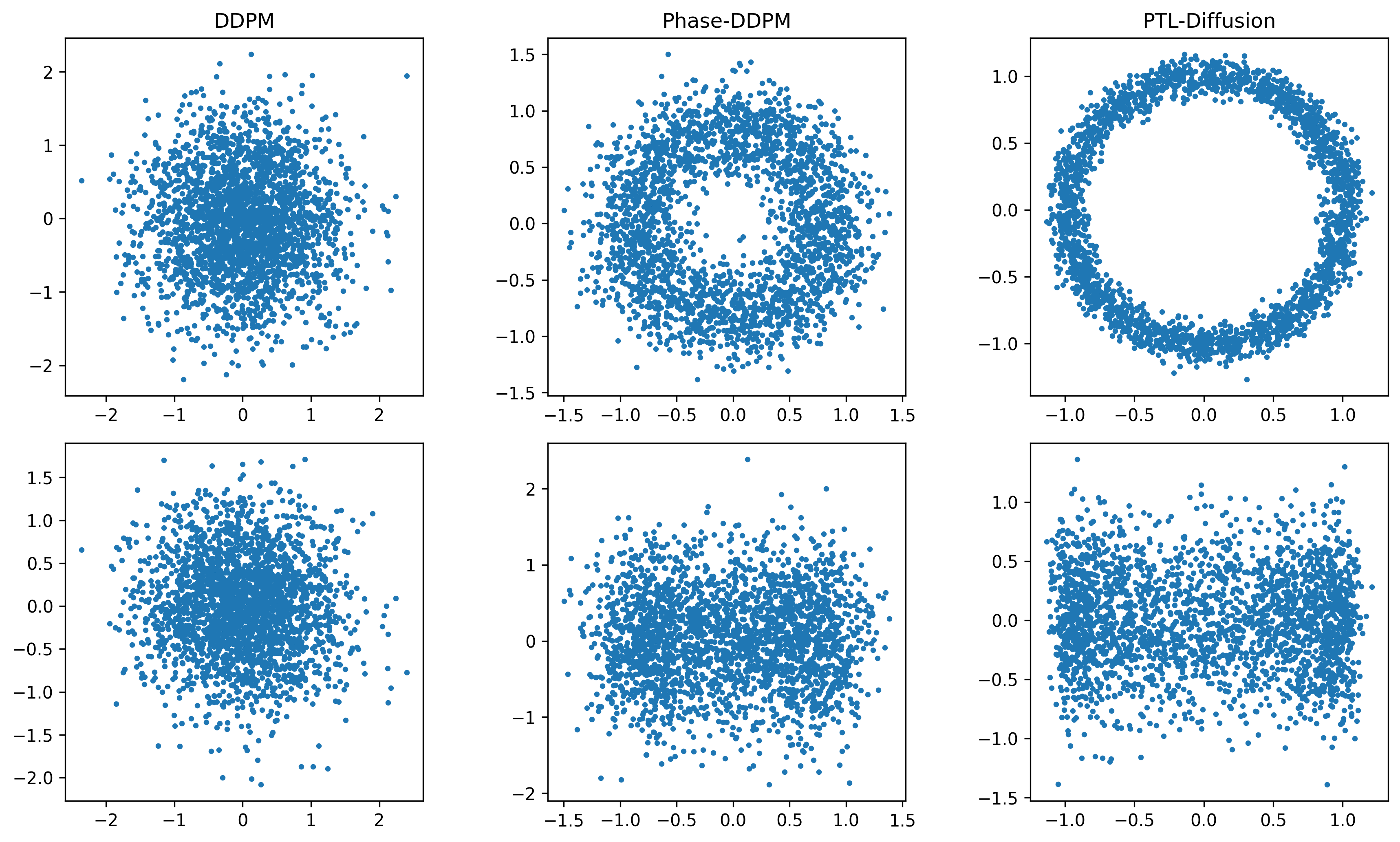}
    \caption{Cylinder dataset}
    \label{fig:cylinder_sub}
\end{subfigure}

\vspace{-0.5em}

\caption{Sample comparison on the torus and cylinder datasets after 60 training epochs. 
For the torus dataset, the columns show samples generated by DDPM and PTL-Diffusion from left to right, while the upper and lower panels show the projections onto the first and second $S^1$ components of the torus, respectively. 
For the cylinder dataset, the plots show samples generated by DDPM and PTL-Diffusion, where the upper row shows the circular cross-section and the lower row shows the distribution along the cylinder height.}
\label{fig:torus_cylinder}
\end{figure}

\begin{table}[H]
\centering
\caption{Quantitative comparison on the torus dataset over 10 independent sampling runs. 
Results are reported as mean $\pm$ standard deviation. Lower values indicate better performance for all metrics.}
\label{tab:torus_metrics}
\vspace{-0.8em}
\resizebox{\textwidth}{!}{
\begin{tabular}{lcccc}
\hline
Metric & DDPM & Phase-DDPM & PTL-Diffusion & PTL-Diffusion w/o reg. \\
\hline
$E_{\mathcal{W}_1}$ 
& $0.159 \pm 0.003$ 
& $0.109 \pm 0.002$ 
& $\mathbf{0.043 \pm 0.002}$ 
& $0.043 \pm 0.003$ \\

$E_{\mathrm{corr}}$ 
& $0.026 \pm 0.005$ 
& $\mathbf{0.015 \pm 0.004}$ 
& $0.016 \pm 0.002$ 
& $0.017 \pm 0.004$ \\

$E_{\mathrm{phase\text{-}TV}}$ 
& $0.059 \pm 0.007$ 
& $0.057 \pm 0.007$ 
& $\mathbf{0.052 \pm 0.005}$ 
& $0.056 \pm 0.007$ \\

$E_{\mathrm{torus}}$ 
& $1.319 \pm 0.015$ 
& $0.879 \pm 0.012$ 
& $0.496 \pm 0.005$ 
& $\mathbf{0.486 \pm 0.006}$ \\

$E_{\mathrm{phase}}$ 
& $0.240 \pm 0.003$ 
& $0.167 \pm 0.003$ 
& $\mathbf{0.057 \pm 0.001}$ 
& $0.057 \pm 0.001$ \\

$E_{\mathrm{avg}}$ 
& $0.159 \pm 0.003$ 
& $0.107 \pm 0.002$ 
& $0.039 \pm 0.001$ 
& $\mathbf{0.038 \pm 0.001}$ \\
\hline
\end{tabular}
}
\end{table}

\begin{table}[H]
\centering
\caption{Quantitative comparison on the cylinder dataset over 10 independent sampling runs. 
Results are reported as mean $\pm$ standard deviation. Lower values indicate better performance for all metrics.}
\label{tab:cylinder_metrics}
\vspace{-0.8em}
\resizebox{\textwidth}{!}{
\begin{tabular}{lcccc}
\hline
Metric & DDPM & Phase-DDPM & PTL-Diffusion & PTL-Diffusion w/o reg. \\
\hline
$E_{\mathcal{W}_1}$ 
& $0.148 \pm 0.003$ 
& $\mathbf{0.084 \pm 0.003}$ 
& $0.086 \pm 0.005$ 
& $0.087 \pm 0.003$ \\

$E_{\mathrm{corr}}$ 
& $0.014 \pm 0.006$ 
& $0.015 \pm 0.004$ 
& $\mathbf{0.013 \pm 0.004}$ 
& $0.014 \pm 0.004$ \\

$E_{\mathrm{phase\text{-}TV}}$ 
& $0.064 \pm 0.008$ 
& $\mathbf{0.055 \pm 0.007}$ 
& $0.056 \pm 0.007$ 
& $0.059 \pm 0.006$ \\

$E_{\mathrm{cyl}}$ 
& $0.715 \pm 0.012$ 
& $0.341 \pm 0.004$ 
& $0.135 \pm 0.002$ 
& $\mathbf{0.128 \pm 0.002}$ \\

$E_{\mathrm{phase}}$ 
& $0.188 \pm 0.003$ 
& $0.107 \pm 0.002$ 
& $0.085 \pm 0.002$ 
& $\mathbf{0.084 \pm 0.001}$ \\

$E_{\mathrm{avg}}$ 
& $0.145 \pm 0.004$ 
& $0.080 \pm 0.001$ 
& $\mathbf{0.076 \pm 0.002}$ 
& $0.077 \pm 0.002$ \\
\hline
\end{tabular}
}
\end{table}
\paragraph{Ablation study.}
The ablation results using PTL-Diffusion without regularization show that the model still performs competitively, and in some metrics it obtains slightly lower numerical values than the fully regularized version. 
However, the differences between PTL-Diffusion and its non-regularized variant are relatively small compared with the large performance gap between DDPM and the proposed periodic models. 
This suggests that the main performance gain comes from the periodic diffusion formulation itself, while the regularization term provides additional control over the learned representation and can help stabilize the training behavior.
\paragraph{Sensitivity analysis.}
{
Figure~\ref{fig:p_sensitivity} investigates the influence of the periodic parameter $P$ on the performance of PTL-Diffusion for the torus and cylinder datasets.}

{
For both datasets, the relevant manifold-constraint error ($E_{\mathrm{torus}}$ or $E_{\mathrm{cyl}}$) decreases substantially as $P$ increases from a small value, indicating that an overly coarse phase discretization is insufficient to capture the underlying periodic geometry. 
This effect is particularly pronounced for the cylinder dataset, where $E_{\mathrm{cyl}}$ drops rapidly from $P=5$ to around $P=20$ and then decreases more gradually. 
A similar trend is observed on the torus dataset, although the torus constraint error remains larger overall, reflecting the more complex periodic structure of the torus geometry. In contrast, both $E_{\mathcal{W}_1}$ and $E_{\mathrm{corr}}$ are much less sensitive to $P$.}

{
In our experiments, $P=30$ is therefore adopted as a practical default, since it lies in the stable region where further increases in $P$ provide only limited additional improvement.
}
\begin{figure}[H]
    \centering    \includegraphics[width=1.0\linewidth]{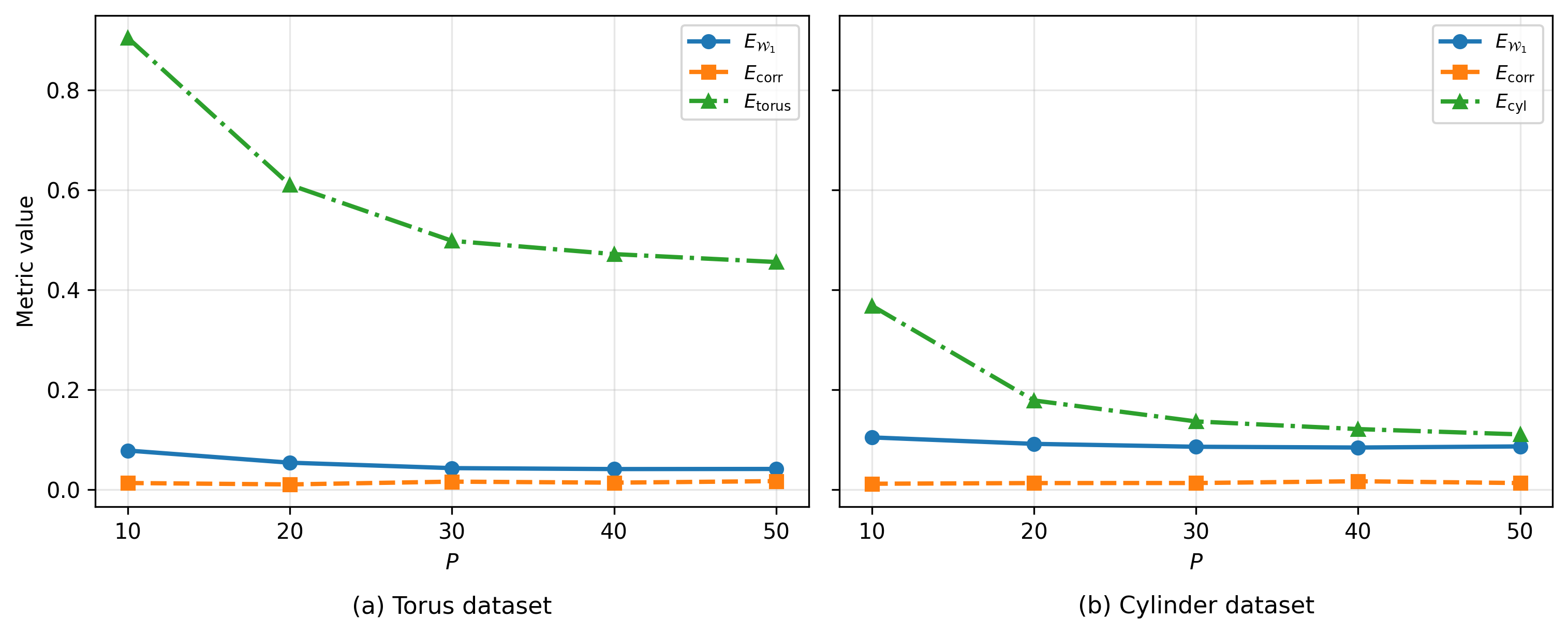}
\caption{Sensitivity analysis of PTL-Diffusion under different periodic parameters $P$: (a) reports the results on the torus dataset, while (b) reports the results on the cylinder dataset. 
For each value of $P$, the model is evaluated over 10 independent sampling runs, and the plotted values represent the mean performance.}
\label{fig:p_sensitivity}
\end{figure}

\subsection{Face dataset generation}
\label{subsec:face_generation}
We next evaluate PTL-Diffusion on the Olivetti face dataset \cite{olivetti_faces} as a small-scale
image-manifold benchmark. The dataset contains $400$ grayscale face images of
size $64\times 64$ from $40$ subjects. As shown in Figs.~\ref{fig:olivetti_diff_sub} and~\ref{fig:olivetti_same_sub}, the dataset contains both inter-subject variations across different individuals and intra-subject variations for the same person. Although the images lie in a
$4096$-dimensional ambient space, their dominant variability is well captured
by a low-dimensional eigenface/PCA representation. We therefore use this
dataset to test whether PTL-Diffusion improves generation on data concentrated
near an empirical face manifold.

\begin{figure}[H]
\centering

\begin{subfigure}[t]{1.0\textwidth}
    \centering
    \includegraphics[width=1.0\textwidth]{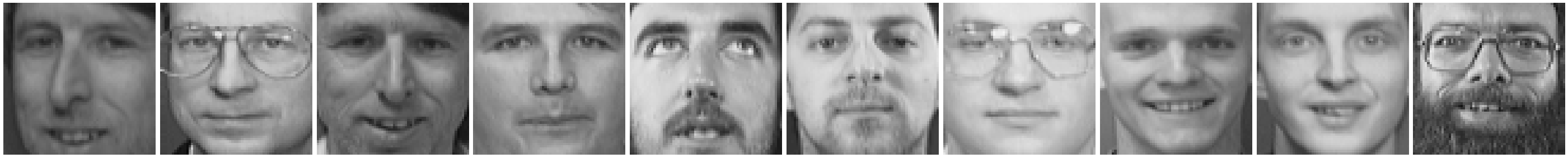}
    \vspace{-1.5em}
    \caption{Ten sample face images from different subjects}
    \label{fig:olivetti_diff_sub}
\end{subfigure}

\vspace{0.8em}

\begin{subfigure}[t]{1.0\textwidth}
    \centering
    \includegraphics[width=1.0\textwidth]{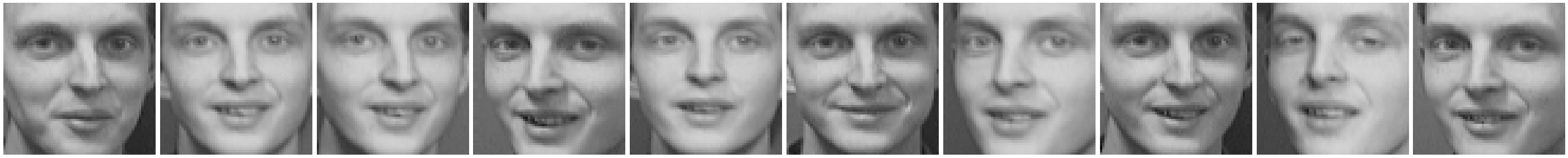}
    \includegraphics[width=1.0\textwidth]{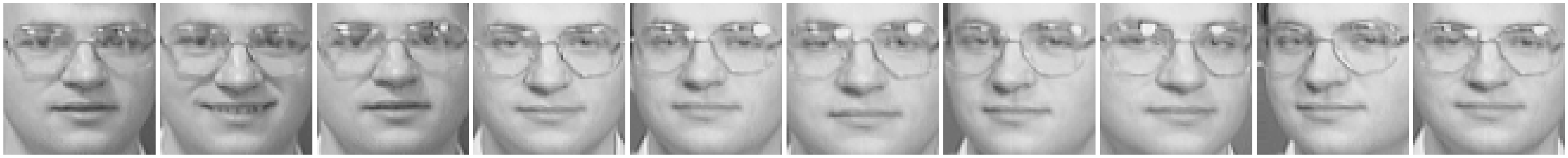}
    \includegraphics[width=1.0\textwidth]{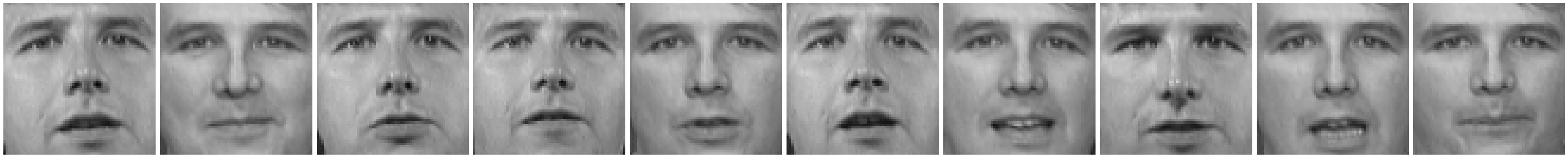}
    \vspace{-1.5em}
    \caption{Multiple images of the same subjects}
    \label{fig:olivetti_same_sub}
\end{subfigure}

\vspace{-0.5em}

\caption{
Examples from the Olivetti faces dataset:
(\subref{fig:olivetti_diff_sub}) shows ten face images selected from different subjects, illustrating inter-subject variation; 
(\subref{fig:olivetti_same_sub}) shows multiple images of the same subjects, illustrating intra-subject variation caused by changes in expression, pose, and viewing conditions.
}
\label{fig:olivetti_examples}
\end{figure}

\paragraph{Phase construction.}
Since Olivetti is not a dynamical system with a physical time phase, we
construct the phase variable from the data manifold itself. {In this experiment, the periodic parameter is set to $P=16$.} We first fit PCA on
the training images and project each image $x_i$ to its first two principal
components,
\begin{equation*}
    z_i
    =
    (z_{i,1},z_{i,2})
    =
    F_{\mathrm{PCA}}(x_i).
\end{equation*}
The phase label is then defined by the angular coordinate
\begin{equation*}
    r_i
    =
    \left\lfloor
    \frac{P}{2\pi}
    \operatorname{atan2}(z_{i,2},z_{i,1})
    \right\rfloor
    \pmod{P} .
\end{equation*}
This construction treats phase as a coarse coordinate on the empirical
eigenface manifold, rather than as physical time.

Given the phase labels, we estimate phase centers using
Eq.~\eqref{eq:empirical_phase_centres} and use the centered data-adapted forcing
from Eq.~\eqref{eq:data_adapted_forcing}. Thus the deterministic part of the
PTL forward process transports the centered phase center $\tilde c_r$ toward
$\tilde c_{r+1}$ while preserving the zero-mean forcing condition in
Eq.~\eqref{eq:zero_mean_forcing}.

\paragraph{Evaluation.}
For Olivetti, standard large-scale image metrics such as FID are less reliable
because the dataset contains only $400$ grayscale images. We therefore use
metrics tailored to small face-manifold generation.

First, we compute a pixel-level marginal Wasserstein-$1$ distance. After flattening
images into $\mathbb{R}^{4096}$, we define
\begin{equation*}
    E_{\mathrm{pixel}}
    =
    \frac{1}{|\mathcal{J}|}
    \sum_{j\in\mathcal{J}}
    \mathcal{W}_1
    \left(
        \{x_i^{(j)}\}_{i=1}^{N},
        \{\hat{x}_i^{(j)}\}_{i=1}^{M}
    \right),
\end{equation*}
where $\mathcal{J}$ is either the full set of pixel coordinates or a fixed
random subset. This measures ambient grayscale fidelity.

Second, we evaluate distributional matching in eigenface space. Let
$F_{\mathrm{PCA}}$ be the PCA map fitted on training images, and define
\begin{equation*}
    z_i = F_{\mathrm{PCA}}(x_i),
    \qquad
    \hat{z}_i = F_{\mathrm{PCA}}(\hat{x}_i),
\end{equation*}
with corresponding means $\mu_z$ and $\mu_{\hat{z}}$, and covariances $\Sigma_z$ and $\Sigma_{\hat{z}}$.
We report the eigenface mean error
\begin{equation*}
    E_{\mathrm{PCA\text{-}mean}}
    =
    \left|
        \mu_z - \mu_{\hat{z}}
    \right|,
\end{equation*}
and the eigenface covariance error
\begin{equation*}
    E_{\mathrm{PCA\text{-}cov}}
    =
    \frac{1}{d_{\mathrm{PCA}}}
    \left\|
        \Sigma_z - \Sigma_{\hat{z}}
    \right\|_F,
\end{equation*}
where $\|A\|_F$ is the Frobenius norm of a matrix $A$, and $d_{\mathrm{PCA}}\in \mathbb{N}$ is the dimension of the PCA space.
These metrics measure whether generated samples match the location and shape
of the empirical face manifold.

Third, we compute the nearest-neighbor distance from generated samples to the
training set in PCA space:
\begin{equation*}
    E_{\mathrm{NN}}
    =
    \frac{1}{M}
    \sum_{i=1}^{M}
    \min_{1\leq j\leq N_{\mathrm{train}}}
    \left|
        \hat{z}_i - z_j^{\mathrm{train}}
    \right| .
\end{equation*}
This metric acts as a small-data proxy for manifold fidelity: lower values
indicate that generated samples lie closer to the empirical face manifold.

Finally, we evaluate low-frequency illumination statistics. Let $B(\cdot)$ be a
fixed Gaussian blur or low-pass filter, and let $G_{\mathrm{PCA}}$ be a PCA map
fitted to blurred training images. We define
\begin{equation*}
    \ell_i = G_{\mathrm{PCA}}(B(x_i))\text{ and }
    \hat{\ell}_i = G_{\mathrm{PCA}}(B(\hat{x}_i)),
\end{equation*}
with corresponding means $\mu_\ell$ and $\mu_{\hat{\ell}}$, and covariances $\Sigma_\ell$ and $\Sigma_{\hat{\ell}}$.
We report
\begin{equation*}
    E_{\mathrm{LF\text{-}mean}}
    =
    \left|
        \mu_{\ell} - \mu_{\hat{\ell}}
    \right|
\text{ and }
    E_{\mathrm{LF\text{-}cov}}
    =
    \frac{1}{d_{\mathrm{LF}}}
    \left\|
        \Sigma_{\ell} - \Sigma_{\hat{\ell}}
    \right\|_F,
\end{equation*}
where $d_{\mathrm{LF}}\in \mathbb{N}$ is the dimension of the corresponding PCA space.
These metrics target coarse illumination and shading structure rather than
high-frequency pixel details.
\paragraph{Model comparison.} 
The Olivetti faces experiments were conducted for 5000 training epochs for all four models compared. Table~\ref{tab:olivetti_metrics} reports the quantitative comparison among DDPM, Phase-DDPM, PTL-Diffusion, and PTL-Diffusion without regularization. 

\begin{table}[H]
\centering
\caption{Quantitative comparison on the Olivetti faces dataset over 10 independent sampling runs. Results are reported as mean $\pm$ standard deviation. Lower values indicate better performance for all metrics.}
\label{tab:olivetti_metrics}
\vspace{-0.8em}
\resizebox{\columnwidth}{!}{
\begin{tabular}{lcccc}
\hline
Metric
& DDPM 
& Phase-DDPM
& PTL-Diffusion 
& PTL-Diffusion w/o reg. \\
\hline

$E_{\mathrm{pixel}}$ 
& $0.228 \pm 0.006$ 
& $0.233 \pm 0.006 $
& $\mathbf{0.110 \pm 0.004}$ 
& $0.122 \pm 0.004$ \\

$E_{\mathrm{PCA\text{-}mean}}$ 
& $11.338 \pm 0.174$ 
& $12.120 \pm 0.161 $
& $\mathbf{5.707 \pm 0.278}$ 
& $6.881 \pm 0.281$ \\

$E_{\mathrm{PCA\text{-}cov}}$ 
& $2.956 \pm 0.019$ 
& $2.956 \pm 0.022$
& $\mathbf{1.517 \pm 0.132}$ 
& $1.546 \pm 0.116$ \\

$E_{\mathrm{NN}}$ 
& $9.813 \pm 0.113$ 
& $10.183 \pm 0.114 $
& $7.782 \pm 0.246$ 
& $\mathbf{7.326 \pm 0.265}$ \\

$E_{\mathrm{LF\text{-}mean}}$ 
& $10.636 \pm 0.186$ 
& $11.409 \pm 0.173 $
& $\mathbf{5.292 \pm 0.304}$ 
& $6.402 \pm 0.367$ \\

$E_{\mathrm{LF\text{-}cov}}$ 
& $5.477 \pm 0.040$ 
& $ 5.474 \pm 0.045$
& $2.513 \pm 0.304$ 
& $\mathbf{2.508 \pm 0.267}$ \\

\hline
\end{tabular}
}
\end{table}
Overall, PTL-Diffusion achieves substantially lower errors than the conventional DDPM and Phase-DDPM across all evaluation metrics, indicating that the proposed periodic terminal-law formulation is also effective for structured image data. 
For example, the pixel-level error $E_{\mathrm{pixel}}$ is reduced from $0.228$ for DDPM and $0.233$ for Phase-DDPM to $0.110$ for PTL-Diffusion. 
Similarly, the PCA-based mean error decreases from $11.338$ for DDPM and $12.120$ for Phase-DDPM to $5.707$, and the LF-mean error decreases from $11.409$ to $5.292$. 
These improvements suggest that PTL-Diffusion can better capture both low-level image similarity and higher-level structural features of the face images.

{It is also worth noting that Phase-DDPM does not consistently improve over the conventional DDPM on the Olivetti dataset. 
Although it uses phase information, its errors are close to or slightly higher than those of DDPM on several metrics, such as $E_{\mathrm{pixel}}$, $E_{\mathrm{PCA\text{-}mean}}$, $E_{\mathrm{NN}}$, and $E_{\mathrm{LF\text{-}mean}}$. 
This suggests that simply adding a phase embedding to the denoising network is not sufficient to improve generation quality for this image dataset. 
In contrast, PTL-Diffusion modifies the diffusion process itself through the periodic latent formulation, which leads to a more effective representation of the structured face distribution.
}

Fig.~\ref{fig:face} further illustrates the face generation process at different training epochs. 
At the early stage of training, all models produce noisy and poorly structured outputs. 
As training progresses, DDPM gradually learns coarse facial patterns, while Phase-DDPM shows slightly more structured intermediate samples than DDPM in some epochs but remains relatively blurred. 
By comparison, PTL-Diffusion produces clearer and more face-like samples after sufficient training, with more recognizable facial contours and smoother image structures. 

\begin{figure}[H]
\centering
\includegraphics[width=1.0\textwidth]{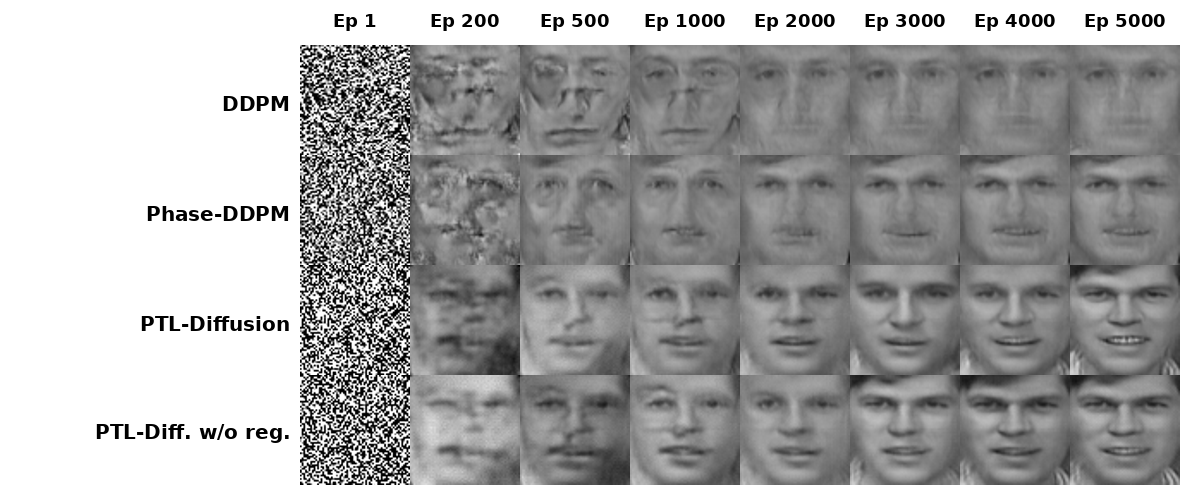}
\vspace{-2.0em}
\caption{Comparison of face generation progress: columns show selected training epochs, and the rows show DDPM, Phase-DDPM, PTL-Diffusion, and PTL-Diffusion without regularization from top to bottom.
}
\label{fig:face}
\end{figure}

\paragraph{Ablation study.}
As shown in Table~\ref{tab:olivetti_metrics}, the two variants achieve comparable performance on the Olivetti faces dataset. 
The regularized PTL-Diffusion model obtains slightly lower errors on $E_{\mathrm{pixel}}$, $E_{\mathrm{PCA\text{-}mean}}$, $E_{\mathrm{PCA\text{-}cov}}$, and $E_{\mathrm{LF\text{-}mean}}$, while the variant without regularization gives marginally lower values on $E_{\mathrm{NN}}$ and $E_{\mathrm{LF\text{-}cov}}$. 

\section{Discussion and Conclusion}

We introduced PTL-Diffusion, a diffusion framework in which the forward noising process converges to a nonconstant periodic family of Gaussian terminal laws rather than to a single time-homogeneous Gaussian reference distribution. The motivation is to provide the reverse model with a structured terminal reference geometry when the data are concentrated near a low-dimensional manifold or admit a meaningful coarse phase coordinate. In this sense, the phase variable is not used merely as an auxiliary conditioning input to the denoising network; it is built into the reference dynamics of the diffusion process itself.

The proposed construction can be viewed as a coarse, chart-inspired way of organizing the terminal reference law. A manifold may require several local coordinate descriptions rather than a single global Euclidean coordinate system. PTL-Diffusion does not explicitly learn manifold charts, transition maps, or a Riemannian metric. Instead, it uses a finite phase-indexed family of Gaussian terminal laws as a tractable approximation to this local organization. The phase variable acts as a coarse descriptor of variation in the data, while the periodic terminal family provides different reference laws for different phase classes.

The proposed construction preserves much of the tractability of standard denoising diffusion models. For the periodically forced Ornstein--Uhlenbeck-type forward process, we derived closed-form forward marginals, identified the limiting periodic Gaussian family, and obtained explicit Gaussian reverse posteriors. These formulae allow the model to be trained using a standard noise-prediction objective, while replacing the usual single terminal law by a phase-indexed reference family. We also introduced an invariant-average regularization term, which couples the phase-conditioned reverse dynamics by enforcing consistency with the averaged periodic reference law. This regularization allows phase-wise flexibility while discouraging the learned phase-conditioned reverse processes from becoming unrelated.

A key distinction from phase-conditioned DDPM is that PTL-Diffusion changes the forward reference law, not only the input variables of the reverse model. In a phase-conditioned DDPM, the phase label may enter the denoising network, but the forward process still destroys all samples toward the same time-homogeneous Gaussian terminal distribution. By contrast, PTL-Diffusion uses a forward process whose limiting object is the periodic family $\{\mu_r^\ast\}_{r=0}^{P-1}$. Thus, phase information is encoded in the noising dynamics and terminal geometry, rather than being left entirely for the neural network to recover during reverse denoising. This provides a stronger structural inductive bias, especially when the data admit a meaningful coarse phase representation.

Our experiments provide a proof-of-concept validation of this principle. On synthetic torus and cylinder benchmarks, PTL-Diffusion improves manifold-level distributional matching and phase-conditioned recovery compared with DDPM and phase-conditioned DDPM baselines under matched denoising architectures. On the Olivetti face benchmark, the method also shows improved feature-level and manifold-neighbourhood metrics, suggesting that periodic terminal laws can act as a useful inductive bias beyond explicitly periodic sample paths.

The main limitation of the present construction is that the period $P$ is fixed and finite. This means that the time-indexed or phase-indexed family of laws is represented through finitely many phase classes, which may be restrictive for strongly aperiodic data, continuously drifting nonstationary distributions, or manifolds whose latent geometry cannot be well approximated by a coarse cyclic coordinate. A related limitation is that the phase descriptor used in the current experiments is fixed before training, for example by an angular coordinate or a low-dimensional embedding. Therefore, the proposed periodic terminal family should be understood as a controlled structural approximation rather than a universally expressive model of nonstationarity or manifold geometry.

Nevertheless, these restrictions are also what make the framework mathematically transparent and computationally tractable. The finite-period construction gives a concrete setting in which the terminal reference law is no longer a single invariant Gaussian, while the forward and reverse distributions remain analytically tractable. More expressive variants could learn the phase coordinate, adapt the period $P$, use multiple or hierarchical periods, replace the Gaussian periodic family by richer structured terminal laws, or combine the proposed construction with modern U-Net, transformer, or latent-diffusion backbones.

A natural next direction is large-scale image generation, including face datasets such as CelebA, where phase variables may correspond to coarse factors such as pose, illumination, expression, or identity-related manifold coordinates. Such experiments would test whether the benefits observed in the present proof-of-concept setting persist when PTL-Diffusion is integrated with stronger architectures and larger datasets. They would also clarify when a finite phase partition is sufficient and when a more flexible learned or continuous phase representation is needed.

\bibliographystyle{plain}

\appendix
\section{Proofs}

\subsection{Proof of Lemma \ref{lem:forward}}\label{ssec:prooflem1}

\begin{proof}
Recall the phase-indexed forward recursion
\begin{equation*}
    x_{n+1}
    =
    \rho x_n
    +
    b_{r_0+n}
    +
    \sigma \varepsilon_{n+1},
\end{equation*}
where $\varepsilon_n\sim\mathcal{N}(0,I_d)$ are independent and the phase
indices are understood modulo $P$.

Unrolling the recursion gives
\begin{equation*}
    x_n
    =
    \rho^n x_0
    +
    \sum_{j=0}^{n-1}
    \rho^{n-1-j} b_{r_0+j}
    +
    \sigma
    \sum_{j=1}^{n}
    \rho^{n-j}\varepsilon_j .
\end{equation*}
Conditional on $(x_0,r_0)$, the first two terms are deterministic and the last
term is a linear combination of independent Gaussian random variables. Hence
$x_n\mid x_0,r_0$ is Gaussian:
\begin{equation*}
    q(x_n\mid x_0,r_0)
    =
    \mathcal{N}(m_{n,r_0}(x_0),\Sigma_n),
\end{equation*}
with
\begin{equation*}
    m_{n,r_0}(x_0)
    =
    \rho^n x_0
    +
    \sum_{j=0}^{n-1}
    \rho^{n-1-j} b_{r_0+j},
\end{equation*}
and
\begin{align*}
   \Sigma_n
    =
    \sigma^2
    \sum_{j=1}^{n}
    \rho^{2(n-j)} I_d  =
    \sigma^2
    \sum_{\ell=0}^{n-1}
    \rho^{2\ell} I_d  =
    \sigma^2
    \frac{1-\rho^{2n}}{1-\rho^2}I_d .
\end{align*}
This proves the claimed forward marginal.
\end{proof}
\subsection{Proof of Theorem \ref{thm:limit}}\label{ssec:proofthm1}
\begin{proof}
The argument follows the same pull-back idea as the periodically forced
Ornstein--Uhlenbeck example in \cite{wuyuan2023galerkin}: the
limiting object is obtained by starting the recursion in the remote past and
letting the initial condition be forgotten.
We first use the absolute phase convention $r_0=0$; a general initial phase is
obtained by replacing $b_k$ with the shifted forcing $b_{r_0+k}$ throughout.

Extend the forcing periodically to all integer times by setting
$b_{r+P}=b_r$. For $m<n$, write the solution started from $x_m$ at time $m$ as
\begin{equation*}
    x_n^{m,x_m}
    =
    \rho^{n-m}x_m
    +
    \sum_{k=m}^{n-1}\rho^{n-1-k}b_k
    +
    \sigma\sum_{k=m}^{n-1}\rho^{n-1-k}\varepsilon_{k+1}.
\end{equation*}
Since $0<\rho<1$, the contribution of the initial condition vanishes as
$m\to-\infty$. The deterministic series converges absolutely because $(b_k)$ is bounded and
$\sum_{j\geq 0}\rho^j<\infty$, while the Gaussian series converges in $L^2$
because its tail variance is
$\sigma^2 d\sum_{j\geq M}\rho^{2j}\to 0$. Hence the pull-back limit
\begin{equation*}
    x_n^\ast
    :=
    \sum_{j=0}^{\infty}\rho^j b_{n-1-j}
    +
    \sigma\sum_{j=0}^{\infty}\rho^j\varepsilon_{n-j}
\end{equation*}
is well defined. Its law depends on $n$ only through the phase
$r=n\,(\mathrm{mod}\,P)$.
Therefore define
\begin{equation*}
    \mu_r^\ast
    :=
    \mathcal{L}(x_n^\ast),
    \qquad r=n\,(\mathrm{mod}\,P).
\end{equation*}
Since the noise variables are i.i.d. standard Gaussian, $\mu_r^\ast$ is Gaussian
with
\begin{equation*}
    m_r^\ast
    =
    \sum_{j=0}^{\infty}\rho^j b_{r-1-j},
    \qquad
    \Sigma^\ast
    =
    \sigma^2\sum_{j=0}^{\infty}\rho^{2j}I_d
    =
    \frac{\sigma^2}{1-\rho^2}I_d .
\end{equation*}
Thus
\begin{equation*}
    \mu_r^\ast
    =
    \mathcal{N}(m_r^\ast,\Sigma^\ast).
\end{equation*}

The family is $P$-periodic. Indeed, using $b_{r+P}=b_r$,
\begin{equation*}
    m_{r+P}^\ast
    =
    \sum_{j=0}^{\infty}\rho^j b_{r+P-1-j}
    =
    \sum_{j=0}^{\infty}\rho^j b_{r-1-j}
    =
    m_r^\ast,
\end{equation*}
and $\Sigma^\ast$ is independent of $r$. Hence
$\mu_{r+P}^\ast=\mu_r^\ast$.
The family is nonconstant whenever the forcing is nonconstant: if
$m_r^\ast$ were independent of $r$, then the phase-advance mean recursion
$m_{r+1}^\ast=\rho m_r^\ast+b_r$ would force $b_r=(1-\rho)m_r^\ast$ to be
independent of $r$, a contradiction.

We next verify the phase-advance property. If $x_n^\ast\sim\mu_r^\ast$, where
$r=n\,(\mathrm{mod}\,P)$, then
\begin{equation*}
    x_{n+1}^\ast
    =
    \rho x_n^\ast+b_n+\sigma\varepsilon_{n+1}.
\end{equation*}
This follows directly from the infinite-past representation:
\begin{align*}
    \rho x_n^\ast+b_n+\sigma\varepsilon_{n+1}
    &=
    \rho
    \sum_{j=0}^{\infty}\rho^j b_{n-1-j}
    +
    b_n
    +
    \sigma\rho
    \sum_{j=0}^{\infty}\rho^j\varepsilon_{n-j}
    +
    \sigma\varepsilon_{n+1}  \\
    &=
    \sum_{j=0}^{\infty}\rho^j b_{n-j}
    +
    \sigma\sum_{j=0}^{\infty}\rho^j\varepsilon_{n+1-j}
    =
    x_{n+1}^\ast .
\end{align*}
Therefore one forward step maps $\mu_r^\ast$ to $\mu_{r+1}^\ast$, with indices
understood modulo $P$.

It remains to show attraction to this family. Let $x_0\sim\nu_0$ have finite
second moment, let $x_n^{0,x_0}$ be the process started from $x_0$ at time $0$,
and let $x_n^\ast$ be the pull-back stationary periodic version constructed
above using the same future noise variables $\varepsilon_1,\ldots,\varepsilon_n$.
Then
\begin{equation*}
    x_n^{0,x_0}-x_n^\ast
    =
    \rho^n x_0
    -
    \rho^n x_0^\ast,
\end{equation*}
where
\begin{equation*}
    x_0^\ast
    =
    \sum_{j=0}^{\infty}\rho^j b_{-1-j}
    +
    \sigma\sum_{j=0}^{\infty}\rho^j\varepsilon_{-j}.
\end{equation*}
Consequently,
\begin{equation*}
    \mathbb{E}\bigl[|x_n^{0,x_0}-x_n^\ast|^2\bigr]
    =
    \rho^{2n}
    \mathbb{E}\bigl[|x_0-x_0^\ast|^2\bigr]
    \longrightarrow 0.
\end{equation*}
By the definition of the Wasserstein-1 distance and the above coupling,
\begin{align*}
    \mathcal{W}_1
    \left(
        \mathcal{L}(x_n^{0,x_0}),
        \mu_{n\,(\mathrm{mod}\,P)}^\ast
    \right)
    &\le
    \left(
        \mathbb{E}\bigl[|x_n^{0,x_0}-x_n^\ast|^2\bigr]
    \right)^{1/2}
    \longrightarrow 0 .
\end{align*}
Thus the law of the forward process converges in $\mathcal{W}_1$ to the
corresponding phase of the periodic terminal family. In particular, it also
converges weakly along each fixed phase subsequence.

Replacing $b_k$ by the shifted forcing $b_{r_0+k}$ gives the general initial
phase statement
\begin{equation*}
    \mathcal{W}_1
    \left(
        \mathcal{L}(x_n),
        \mu_{r_0+n\,(\mathrm{mod}\,P)}^\ast
    \right)
    \longrightarrow 0 .
\end{equation*}
In particular, along any subsequence satisfying $r_0+n_\ell\equiv r\pmod{P}$,
\begin{equation*}
    \mathcal{L}(x_{n_\ell})
    \Rightarrow
    \mu_r^\ast
    \qquad
    \text{as }\ell\to\infty .
\end{equation*}

Finally, uniqueness follows from the same contraction argument. Suppose
$\{\nu_r\}_{r=0}^{P-1}$ is another $P$-periodic Gaussian family propagated by
the same transition kernels. Let $u_r$ and $\Gamma_r$ denote its phase-wise
means and covariances. After one full period,
\begin{equation*}
    \Gamma_r
    =
    \rho^{2P}\Gamma_r
    +
    \sigma^2\sum_{j=0}^{P-1}\rho^{2j}I_d,
\end{equation*}
so
\begin{equation*}
    \Gamma_r
    =
    \frac{\sigma^2}{1-\rho^2}I_d.
\end{equation*}
Similarly, comparing the mean recursion with that of $m_r^\ast$, the difference
$d_r=u_r-m_r^\ast$ satisfies
\begin{equation*}
    d_r=\rho^P d_r.
\end{equation*}
Since $0<\rho<1$, $d_r=0$. Thus $u_r=m_r^\ast$ and
$\Gamma_r=\Sigma^\ast$ for every phase $r$, proving uniqueness.
\end{proof}
\subsection{Proof of Lemma~\ref{lem:reverse_posterior}} \label{ssec:prooflem2}

\begin{proof}
Fix $x_0$ and the initial phase $r_0$. By Lemma \ref{lem:forward}, the forward marginal is
\begin{equation*}
    q(x_n\mid x_0,r_0)
    =
    \mathcal{N}
    \left(
        m_{n,r_0}(x_0),
        \Sigma_n
    \right),
\end{equation*}
where
\begin{equation*}
    \Sigma_n
    =
    \sigma^2
    \frac{1-\rho^{2n}}{1-\rho^2}I_d .
\end{equation*}
The one-step transition from $x_n$ to $x_{n+1}$ is
\begin{equation*}
    q(x_{n+1}\mid x_n,r_0)
    =
    \mathcal{N}
    \left(
        \rho x_n+b_{r_0+n},
        \sigma^2 I_d
    \right).
\end{equation*}
Equivalently,
\begin{equation*}
    q\left( \frac{x_{n+1}-b_{r_0+n}}{\rho}\right) 
    =
    \mathcal{N}\left(x_n, \frac{\sigma^2}{\rho^2} I_d\right).
\end{equation*}
Since both $q(x_n\mid x_0,r_0)$ and $q(x_{n+1}\mid x_n,r_0)$ are Gaussian,
Bayes' rule gives
\begin{align*}
    q(x_n\mid x_{n+1},x_0,r_0)
    &\propto
    q(x_{n+1}\mid x_n,r_0)
    q(x_n\mid x_0,r_0) \\
    &\propto
    \exp
    \left(
        -\frac{1}{2\sigma^2}
        \left|
            x_{n+1}-b_{r_0+n}-\rho x_n
        \right|^2
    \right) \\
    &\quad \times
    \exp
    \left(
        -\frac{1}{2}
        \left\|
            x_n-m_{n,r_0}(x_0)
        \right\|_{\Sigma_n^{-1}}^2
    \right),
\end{align*}
where $\|v\|_{A}^{2}:=v^\top A v$ for any positive definite matrix $A$.
Collecting the quadratic terms in $x_n$, the posterior precision is
\begin{equation*}
    \widetilde{\Sigma}_n^{-1}
    =
    \Sigma_n^{-1}
    +
    \frac{\rho^2}{\sigma^2}I_d .
\end{equation*}
Therefore,
\begin{align*}
    \widetilde{\Sigma}_n
   =
    \left(
        \Sigma_n^{-1}
        +
        \frac{\rho^2}{\sigma^2}I_d
    \right)^{-1} =
    \left(
        \frac{1-\rho^2}{\sigma^2(1-\rho^{2n})}I_d
        +
        \frac{\rho^2}{\sigma^2}I_d
    \right)^{-1}=
    \sigma^2
    \frac{1-\rho^{2n}}{1-\rho^{2(n+1)}}I_d .
\end{align*}
The corresponding posterior mean is
\begin{equation*}
    \widetilde{\mu}_{n,r_0}(x_{n+1},x_0)
    =
    \widetilde{\Sigma}_n
    \left(
        \Sigma_n^{-1}m_{n,r_0}(x_0)
        +
        \frac{\rho}{\sigma^2}
        \left(
            x_{n+1}-b_{r_0+n}
        \right)
    \right).
\end{equation*}
This proves the Gaussian posterior formula.

It remains to derive the equivalent noise-parameterized form. By the forward
marginal at time $n+1$,
\begin{equation*}
    x_{n+1}
    =
    m_{n+1,r_0}(x_0)
    +
    \sigma
    \sqrt{
        \frac{1-\rho^{2(n+1)}}{1-\rho^2}
    }
    \,\varepsilon,
    \qquad
    \varepsilon\sim\mathcal{N}(0,I_d).
\end{equation*}
Using the recursion for the forward mean,
\begin{equation*}
    m_{n+1,r_0}(x_0)
    =
    \rho m_{n,r_0}(x_0)
    +
    b_{r_0+n},
\end{equation*}
we get
\begin{equation*}
    \frac{1}{\rho}
    \left(
        x_{n+1}-b_{r_0+n}
    \right)
    =
    m_{n,r_0}(x_0)
    +
    \frac{\sigma}{\rho}
    \sqrt{
        \frac{1-\rho^{2(n+1)}}{1-\rho^2}
    }
    \varepsilon .
\end{equation*}

Conditional on fixed $(x_0,r_0)$, the variables $x_n$ and $x_{n+1}$ are affine
functions of the Gaussian noise vector
$(\varepsilon_1,\ldots,\varepsilon_{n+1})$. Hence
$(x_n,x_{n+1})\mid(x_0,r_0)$ is jointly Gaussian. We may
also use the Gaussian regression formula. In this case,
\begin{equation*}
    \operatorname{Cov}(x_n,x_{n+1}\mid x_0,r_0)
    =
    \rho\Sigma_n,
    \qquad
    \operatorname{Var}(x_{n+1}\mid x_0,r_0)
    =
    \Sigma_{n+1}.
\end{equation*}
Hence
\begin{equation*}
    \widetilde{\mu}_{n,r_0}(x_{n+1},x_0)
    =
    m_{n,r_0}(x_0)
    +
    \rho
    \Sigma_n
    \Sigma_{n+1}^{-1}
    \left(
        x_{n+1}-m_{n+1,r_0}(x_0)
    \right).
\end{equation*}
Substituting the noise representation of $x_{n+1}$ yields
\begin{equation*}
    \widetilde{\mu}_{n,r_0}(x_{n+1},x_0)
    =
    m_{n,r_0}(x_0)
    +
    \rho
    \frac{
        \sigma^2\frac{1-\rho^{2n}}{1-\rho^2}
    }{
        \sigma^2\frac{1-\rho^{2(n+1)}}{1-\rho^2}
    }
    \sigma
    \sqrt{
        \frac{1-\rho^{2(n+1)}}{1-\rho^2}
    }
    \varepsilon .
\end{equation*}
Equivalently,
\begin{equation*}
    \widetilde{\mu}_{n,r_0}(x_{n+1},x_0)
    =
    m_{n,r_0}(x_0)
    +
    \frac{
        \rho\sigma(1-\rho^{2n})
    }{
        \sqrt{1-\rho^2}\sqrt{1-\rho^{2(n+1)}}
    }
    \varepsilon .
\end{equation*}

Comparing this expression with
\begin{equation*}
    \frac{1}{\rho}
    \left(
        x_{n+1}-b_{r_0+n}
    \right)
    =
    m_{n,r_0}(x_0)
    +
    \frac{\sigma}{\rho}
    \sqrt{
        \frac{1-\rho^{2(n+1)}}{1-\rho^2}
    }
    \varepsilon,
\end{equation*}
we obtain
\begin{equation*}
    \widetilde{\mu}_{n,r_0}(x_{n+1},\varepsilon)
    =
    \frac{1}{\rho}
    \left(
        x_{n+1}-b_{r_0+n}
    \right)
    -
    \kappa_n\varepsilon,
\end{equation*}
where
\begin{align*}
    \kappa_n
    =
    \frac{\sigma}{\rho}
    \sqrt{
        \frac{1-\rho^{2(n+1)}}{1-\rho^2}
    }
    -
    \frac{
        \rho\sigma(1-\rho^{2n})
    }{
        \sqrt{1-\rho^2}\sqrt{1-\rho^{2(n+1)}}
    } =
    \frac{
        \sigma\sqrt{1-\rho^2}
    }{
        \rho\sqrt{1-\rho^{2(n+1)}}
    }.
\end{align*}
This proves the claimed noise-parameterized posterior mean.
\end{proof}

\subsection{Proof of Lemma~\ref{lem:inv}} \label{ssec:prooflem3}

\begin{proof}
It is enough to test invariance against an arbitrary bounded measurable function
$f:\{0,\ldots,P-1\}\times\mathbb{R}^d\to\mathbb{R}$. By definition of the
lifted kernel $\mathcal{K}$,
\begin{align*}
    \int \mathcal{K}f(r,x)\,\Pi^\ast(\mathrm{d}r,\mathrm{d}x)
    &=
    \frac{1}{P}
    \sum_{r=0}^{P-1}
    \int_{\mathbb{R}^d}
    \mathcal{K}f(r,x)\,\mu^\ast_r(\mathrm{d}x) \\
    &=
    \frac{1}{P}
    \sum_{r=0}^{P-1}
    \int_{\mathbb{R}^d}
    \int_{\mathbb{R}^d}
    f(r+1,y)\,K_r(x,\mathrm{d}y)\,\mu^\ast_r(\mathrm{d}x).
\end{align*}
Since $K_r\mu^\ast_r=\mu^\ast_{r+1}$, we have
\begin{equation*}
    \int_{\mathbb{R}^d}
    K_r(x,\mathrm{d}y)\,\mu^\ast_r(\mathrm{d}x)
    =
    \mu^\ast_{r+1}(\mathrm{d}y).
\end{equation*}
Therefore,
\begin{align*}
    \int \mathcal{K}f(r,x)\,\Pi^\ast(\mathrm{d}r,\mathrm{d}x)
    &=
    \frac{1}{P}
    \sum_{r=0}^{P-1}
    \int_{\mathbb{R}^d}
    f(r+1,y)\,\mu^\ast_{r+1}(\mathrm{d}y) \\
    &=
    \frac{1}{P}
    \sum_{s=0}^{P-1}
    \int_{\mathbb{R}^d}
    f(s,y)\,\mu^\ast_s(\mathrm{d}y),
\end{align*}
where in the last equality we relabelled $s=r+1$ modulo $P$. Hence
\begin{equation*}
    \int \mathcal{K}f\,\mathrm{d}\Pi^\ast
    =
    \int f\,\mathrm{d}\Pi^\ast .
\end{equation*}
Thus $\Pi^\ast$ is invariant under the lifted Markov kernel $\mathcal{K}$.

Finally, the marginal of $\Pi^\ast$ on $\mathbb{R}^d$ is obtained by summing out the
phase coordinate. For any Borel set $A\subseteq\mathbb{R}^d$,
\begin{equation*}
        \Pi^\ast(\{0,\ldots,P-1\}\times A)
    =
    \frac{1}{P}
    \sum_{r=0}^{P-1}
    (\delta_r\otimes\mu^\ast_r)(\{0,\ldots,P-1\}\times A)=
    \frac{1}{P}
    \sum_{r=0}^{P-1}
    \mu^\ast_r(A).
\end{equation*}
Therefore the $\mathbb{R}^d$-marginal is
$ \bar{\mu}^\ast
    =
    \frac{1}{P}
    \sum_{r=0}^{P-1}\mu_r^\ast $.
This proves the claim.
\end{proof}
\subsection{Derivation of the invariant-average regularization}
\label{ssec:derivation}
We derive the invariant-average regularization from a shared-noise phase average
of the reverse mean using the same indexing as the training objective in
Eq.~\eqref{eqn:avgloss}. This is a counterfactual phase-average calculation:
for a fixed noisy state $x_n$ and forward noise variable $\varepsilon$, we
evaluate the reverse-mean formula over all phase labels. It is not claiming that
the exact posterior for every phase has the same residual for the same observed
$x_n$. For a sample with initial phase $r_0$, the current phase at diffusion depth $n$ is
\begin{equation*}
    r_n = r_0+n\,(\mathrm{mod}\,P) .
\end{equation*}
Consider the reverse step from $x_n$ to $x_{n-1}$, where the forward transition
into $x_n$ used the forcing $b_{r_n-1}$. The learned reverse mean at current
phase $r_n$ is
\begin{equation*}
    \mu_\theta(x_n,n,r_n)
    =
    \frac{1}{\rho}
    \left(
        x_n-b_{r_n-1}
    \right)
    -
    \kappa_{n-1}
    \varepsilon_\theta(x_n,n,\phi(r_n)).
\end{equation*}

To express the invariant-average principle, we average the learned reverse mean
over all possible current phases. Using $s$ as the phase index, define
\begin{equation*}
    \bar{\mu}_\theta(x_n,n)
    :=
    \frac{1}{P}
    \sum_{s=0}^{P-1}
    \left[
        \frac{1}{\rho}
        \left(
            x_n-b_{s-1}
        \right)
        -
        \kappa_{n-1}
        \varepsilon_\theta(x_n,n,\phi(s))
    \right].
\end{equation*}
Therefore,
\begin{equation*}
    \bar{\mu}_\theta(x_n,n)
    =
    \frac{1}{\rho}x_n
    -
    \frac{1}{\rho P}
    \sum_{s=0}^{P-1} b_{s-1}
    -
    \kappa_{n-1}
    \frac{1}{P}
    \sum_{s=0}^{P-1}
    \varepsilon_\theta(x_n,n,\phi(s)).
\end{equation*}
For the centered periodic forcing used in PTL-Diffusion,
\begin{equation*}
    \frac{1}{P}\sum_{s=0}^{P-1} b_{s-1}
    =
    \frac{1}{P}\sum_{s=0}^{P-1} b_s
    =0,
\end{equation*}
as in Eq.~\eqref{eq:zero_mean_forcing}. Hence
\begin{equation*}
    \bar{\mu}_\theta(x_n,n)
    =
    \frac{1}{\rho}x_n
    -
    \kappa_{n-1}
    \frac{1}{P}
    \sum_{s=0}^{P-1}
    \varepsilon_\theta(x_n,n,\phi(s)).
\end{equation*}

The corresponding shared-noise target reverse mean is
\begin{equation*}
    \widetilde{\mu}_{n-1,r_0}(x_n,\varepsilon)
    =
    \frac{1}{\rho}
    \left(
        x_n-b_{r_n-1}
    \right)
    -
    \kappa_{n-1}\varepsilon .
\end{equation*}
Averaging this target reverse mean over phases gives
\begin{equation*}
    \bar{\widetilde{\mu}}_{n-1}(x_n,\varepsilon)
    :=
    \frac{1}{P}
    \sum_{s=0}^{P-1}
    \left[
        \frac{1}{\rho}
        \left(
            x_n-b_{s-1}
        \right)
        -
        \kappa_{n-1}\varepsilon
    \right].
\end{equation*}
Using again $\frac{1}{P}\sum_{s=0}^{P-1}b_s=0$, we obtain
\begin{equation*}
    \bar{\widetilde{\mu}}_{n-1}(x_n,\varepsilon)
    =
    \frac{1}{\rho}x_n
    -
    \kappa_{n-1}\varepsilon .
\end{equation*}

Thus,
\begin{equation*}
    \bar{\mu}_\theta(x_n,n)
    -
    \bar{\widetilde{\mu}}_{n-1}(x_n,\varepsilon)
    =
    -
    \kappa_{n-1}
    \left(
        \frac{1}{P}
        \sum_{s=0}^{P-1}
        \varepsilon_\theta(x_n,n,\phi(s))
        -
        \varepsilon
    \right).
\end{equation*}
Therefore, matching averaged reverse means is equivalent, up to the scalar
factor $\kappa_{n-1}^2$, to the invariant-average denoising regularization
\begin{equation*}
    \mathcal{L}_{\mathrm{avg}}
    =
    \mathbb{E}_{x_0,\varepsilon,n,r_0}
    \left[
    \left|
        \frac{1}{P}
        \sum_{s=0}^{P-1}
        \varepsilon_\theta(x_n,n,\phi(s))
        -
        \varepsilon
    \right|^2
    \right].
\end{equation*}
\end{document}